\journalname{Open Mind}
\newtheorem{theorem}{Theorem}
\begin{document}

\title{A pragmatic account of the weak evidence effect}

\author[Samuel A. Barnett, Thomas L. Griffiths, Robert D. Hawkins]{Samuel A. Barnett\affil{1}, Thomas L. Griffiths\affil{1,2} \and Robert D. Hawkins\affil{2}\thanks{Materials and code for reproducing all behavioral experiments, analyses, and model comparisons are available online at \texttt{\href{https://github.com/s-a-barnett/bayesian-persuasion.git}{github.com/s-a-barnett/bayesian-persuasion}}.
Our study was pre-registered at \texttt{\href{ https://osf.io/gpbzu}{ https://osf.io/gpbzu}}}}

\affiliation{1}{Department of Computer Science, Princeton University, Princeton, New Jersey}
\affiliation{2}{Department of Psychology, Princeton University, Princeton, New Jersey}

\correspondingauthor{Robert D. Hawkins}{rdhawkins@princeton.edu}

\keywords{communication; persuasion; pragmatics; decision-making}

\begin{abstract}
Language is not only used to transmit neutral information; we often seek to \emph{persuade} by arguing in favor of a particular view.
Persuasion raises a number of challenges for classical accounts of belief updating, as information cannot be taken at face value. 
How should listeners account for a speaker's ``hidden agenda'' when incorporating new information?
Here, we extend recent probabilistic models of recursive social reasoning to allow for persuasive goals and show that our model provides a \emph{pragmatic} account for why weakly favorable arguments may backfire, a phenomenon known as the weak evidence effect.
Critically, this model predicts a systematic relationship between belief updates and expectations about the information source: weak evidence should only backfire when speakers are expected to act under persuasive goals and prefer the strongest evidence.
We introduce a simple experimental paradigm called the \emph{Stick Contest} to measure the extent to which the weak evidence effect depends on speaker expectations, and show that a pragmatic listener model accounts for the empirical data better than alternative models.
Our findings suggest further avenues for rational models of social reasoning to illuminate classical decision-making phenomena.
  
\end{abstract}
\newpage

\begin{quote}
``Well, he would [say that], wouldn't he?'' \\ 
--- \textit{Mandy Rice-Davies, 1963}
\end{quote}

\section{Introduction}

Communication is a powerful engine of learning, enabling us to efficiently transmit complex information that would be costly to acquire on our own \citep{tomasello2009cultural, henrich2015secret}.
While much of what we know is learned from others, it can also be challenging to know how to incorporate socially transmitted information into our beliefs about the world. 
Each source is a person with a ``hidden agenda'' encompassing their own beliefs and desires and biases, and not all information can be treated the same \citep{hovland1953communication,o2015persuasion}. 
For example, when deciding whether to buy a car, we may weight information differently depending on whether we heard it from a trusted family memory or the dealership, as we know the dealership is trying to make a sale.
While such reasoning is empirically well-established --- even young children are able to discount information from untrustworthy or unknowledgeable individuals \citep{wood2013whom,sobel2013knowledge,poulin2016developmental,mills2016learning,gweon2014sins,harris2018cognitive} --- these phenomena have continued to pose a problem for formal models of belief updating, which typically take information at face value. 

Recent probabilistic models of social reasoning have provided a mathematical framework for understanding how listeners ought to draw inferences from socially transmitted information. 
Rather than treating information as a direct observation of the true state of the world, social reasoning models suggest treating the true state of the world as a \emph{latent variable} that can be recovered by inverting a generative model of how an intentional agent would share information under different circumstances 
\citep{hawthorne2019reasoning,velez2019integrating,WhalenEtAl18_SensitivityToSharedInfo,jara2016naive,baker2017rational,goodman_pragmatic_2016,goodman2013knowledge}.
These models raise new explanations for classic effects in the judgment and decision-making literature, where behavior is often measured in social or linguistic contexts \citep{politzer2000reasoning,sperber1995relevance,mosconi2001role,bagassi2006pragmatic, mckenzie2003speaker,ma2020delay}. 

Consider the \emph{weak evidence effect}~\citep{mckenzie_when_2002,fernbach_when_2011,lopes1987procedural} or \emph{boomerang effect}~\citep{petty2018attitudes}, a striking case of non-monotonic belief updating where weak evidence in favor of a particular conclusion may backfire and actually reduce an individual's belief in that conclusion. 
For example, suppose a juror is determining the guilt of a defendant in court.
After hearing a prosecutor give a weak argument in support of a guilty verdict -- say, calling a single witness with circumstantial evidence -- we might expect the juror's beliefs to only be shifted weakly in support of guilt. 
Instead, the weak evidence effect describes a situation where the prosecutor's argument actually leads to a shift in the opposite direction -- the juror may now believe that the defendant is more likely to be \emph{innocent}.

Importantly, social reasoning mechanisms are not necessarily in conflict with previously proposed mechanisms for the weak evidence effect, such as algorithmic biases in generating alternative hypotheses \citep{fernbach_when_2011, dasgupta2017hypotheses}, causal reasoning about other non-social attributes of the situation \citep{bhui2020paradoxical} or sequential belief-updating \citep{mckenzie_when_2002,trueblood2011quantum}.
Both social and asocial models are able to account for the basic effect. 
To find \emph{unique} predictions that distinguish models with a social component, then, we argue that we must shift focus from the \emph{existence} of the effect to asking \emph{under what conditions} it emerges. 
Social mechanisms lead to unique predictions about these conditions that purely asocial models cannot generate.
In particular, if evidence comes from an intentional agent who is expected to present the strongest possible argument in favor of their case, then weak evidence would imply the absence of stronger evidence \citep{grice_logic_1975}; otherwise weak evidence may be taken more at face value.
Thus, a pragmatic account predicts a systematic relationship between a listener's social expectations and the strength of the weak evidence effect:\footnote{\citet{harris_james_2013} presents a related model of the \emph{faint praise} effect, where the omission of any stronger information that a speaker would be expected to know implies that it is more likely to be negative than positive (e.g.~``James has very good handwriting.'') Importantly, this effect is sensitive to the perceived expertise of the source; no such implication follows for unknowledgable informants \citep[see also][for related inferences from omission]{hsu2017absence,bonawitz2011double,gweon2014sins}.} \emph{weak evidence should only backfire when the information source is expected to provide the strongest evidence available to them.}

In this paper, we proceed by first extending recent rational models of communication to equip speakers with persuasive goals (rather than purely informative ones) and present a series of simulations deriving key predictions from our model. 
We then introduce a simple behavioral paradigm, the \emph{Stick Contest}, which allows us to elicit a participant's social expectations about the speaker alongside their inferences as listeners. 
Based on speaker expectation data, we find that participants cluster into sub-populations of \emph{pragmatic} listeners or \emph{literal} listeners, who expect speakers to provide strongly persuasive evidence or informative but neutral evidence, respectively.
As predicted by the pragmatic account, only the first group of participants, who expected speakers to provide persuasive evidence, reliably displayed a weak evidence effect in their belief updates. 
Finally, we use these data to quantitatively compare our model against prior asocial accounts and find that a pragmatic model accounting for these hetereogenous groups is most consistent with the empirical data.
Taken together, we suggest that pragmatic reasoning mechanisms are central to explaining belief updating when evidence is presented in social contexts.

\section{Formalizing a pragmatic account of the weak evidence effect}

To derive precise behavioral predictions, we begin by formalizing the pragmatics of persuasion in a computational model.
Specifically, we draw upon recent progress in the Rational Speech Act (RSA) framework \citep{franke2016probabilistic,goodman_pragmatic_2016,scontras2017probabilistic}. 
This framework instantiates a theory of recursive social inference, whereby listeners do not naively update their beliefs to reflect the information they hear, but explicitly account for the fact that speakers are intentional agents choosing which information to provide \citep{grice_logic_1975}.

\subsection{Reasoning about evidence from informative speakers}

We begin by defining a pragmatic listener $L$ who is attempting to update their beliefs about the underlying state of the world $w$ (e.g. the guilt or innocence of the defendant), after hearing an utterance $u$ (e.g. an argument provided by the prosecution). 
According to Bayes' rule, the listener's posterior beliefs about the world $P_L(w \mid u)$ may be derived as follows:
\begin{equation}
\label{eq:vanilla_listener}
    P_L (w \mid u) \propto P_S (u \mid w) P(w)
\end{equation}
where $P(w)$ is the listener's prior beliefs about the world and the likelihood $P_S(u \mid w)$ is derived by imagining what a hypothetical speaker agent would choose to say in different circumstances. 
This term yields different predictions given different assumptions about the speaker, captured by different speaker utility functions $U$. 
In existing RSA models, the speaker is usually assumed to be \emph{epistemically informative}, choosing utterances that bring the listener's beliefs as close as possible to the true state of the world, as measured by information-theoretic surprisal:
\begin{align}
    P_S (u \mid w) & \propto \exp\{\alpha U_{\textrm{epi}}(u; w)\}\nonumber \\
    U_{\textrm{epi}}(u;w) & = \ln P_{L_0} (w \mid u) \label{eq:epiU}
\end{align}
where the free parameter $\alpha\in[0,\infty]$ controls the temperature of the soft-max function and $U_{\textrm{epi}}$ denotes the utility function of an (epistemically) informative speaker. 
As $\alpha \rightarrow \infty$, the speaker increasingly chooses the single utterance with the highest utility, and as ${\alpha\rightarrow 0}$ the speaker becomes indifferent among utterances.
If this hypothetical speaker, in turn, aimed to be informative to the same listener defined in Eq.~\ref{eq:vanilla_listener}, it would yield an infinite recursion: the RSA framework instead assumes that the recursion is grounded in a base case known as the ``literal'' listener, $L_0$, who takes evidence at face value:
\begin{equation}
    P_{L_0} (w \mid u) \propto \delta_{\llbracket u \rrbracket (w)} P(w)\label{eq:L0}.
\end{equation}
Here, $\llbracket u \rrbracket$ gives the literal semantics of the utterance $u$, with $\delta_{\llbracket u \rrbracket (w)}$ returning 1 if $w$ is consistent with the state of affairs denoted by $u$, and 0 (or very small $\epsilon$) otherwise.

\subsection{Reasoning about evidence from motivated speakers}

The epistemic utility defined in Eq.~\ref{eq:epiU} aims only to produce assertions that most effectively lead to \emph{true} beliefs.
Often, however, speakers do not seek to neutrally inform, but to persuade in favor of a particular outcome or ``hidden agenda.''
What is needed to represent such persuasive goals in the RSA framework?
We begin by assuming that motivated speakers have a particular goal state  $w^*$ that they aim to induce in the listener, where $w^*$ does not necessarily coincide with the true state of affairs $w$.
This naturally yields a persuasive utility $U_{\textrm{pers}}$ that aims to persuade the listener to adopt the intended beliefs $w^*$:
\begin{align}
U_{\textrm{pers}}(u; w^*) & = \ln P_{L_0}(w^* \mid u) \label{eq:persU}
\end{align}
where we say an utterance $u$ is strictly more persuasive than $u'$ if and only if $U_{\textrm{pers}}(u \mid  w^*) > U_{\textrm{pers}}(u' \mid  w^*)$ (i.e. when the utterance results in the listener assigning higher probability to the desired state $w^*$).
Following prior extensions of the speaker utility to other non-epistemic goals \cite[e.g.][]{yoon2018balancing,yoon2020polite,bohn2021young}, we then define a combined utility assuming the speaker aims to jointly fulfill persuasive aims (Eq.~\ref{eq:persU}) while remaining consistent with the true world state $w$ (Eq.~\ref{eq:epiU}):
\begin{align}
    P_S (u \mid w, w^*) & \propto \exp\{\alpha \cdot U(u; w, w^*)\} \label{eq:s1}\\
    U(u; w, w^*) & = U_{\textrm{epi}}(u; w) + \beta\cdot U_{\textrm{pers}}(u; w^*)\label{eq:combU}
\end{align}
where $\beta$ is a parameter controlling the strength of the persuasive goal (we recover the standard epistemic RSA model when $\beta = 0$).
This motivated speaker forms the foundation for a pragmatic model of the weak evidence effect.\footnote{Coincident with our work, \cite{vignero2022updating} has proposed a similar formulation to explain how speakers may stretch the truth of epistemic modals like ``possibly'' or ``probably.''}
A pragmatic listener $L_1$ who suspects that the utterance was generated by a motivated speaker with non-zero bias $\beta$ is able to be ``skeptical'' of the speaker's agenda and discount their evidence accordingly:\footnote{Although we formulate the listener's posterior as being conditioned on a \emph{known} value of $\beta$, we can also consider the case in which the listener has a prior distribution over biases and can compute (marginal) posteriors accordingly -- refer to Appendix E for details.}
\begin{equation}\label{eq:L1}
    P_{L} (w \mid u, w^*, \beta) \propto P_{S} (u \mid w^*, w, \beta) \cdot P(w) 
\end{equation}
To see why this model allows evidence to backfire, note that the probability of different utterances are in competition with one another under the speaker model.
In the case that $w$ and $w^*$ coincide, the speaker is expected to choose a utterance that is strongly supportive of that state; weaker utterances have a lower probability of being chosen. 
Conversely, if $w^*$ deviates from the true state of affairs, stronger utterances in favor of $w^*$ will be dispreferred (because they will be false and violate the epistemic term), hence weaker utterances are more likely. 
In this way, the absence of strong evidence from a speaker who would be highly motivated to show it statistically implies that no such evidence exists. 

\begin{figure}[t]
    \captionsetup{font={small,stretch=.75}, strut=on}
    \centering
    \includegraphics[width=0.88\linewidth]{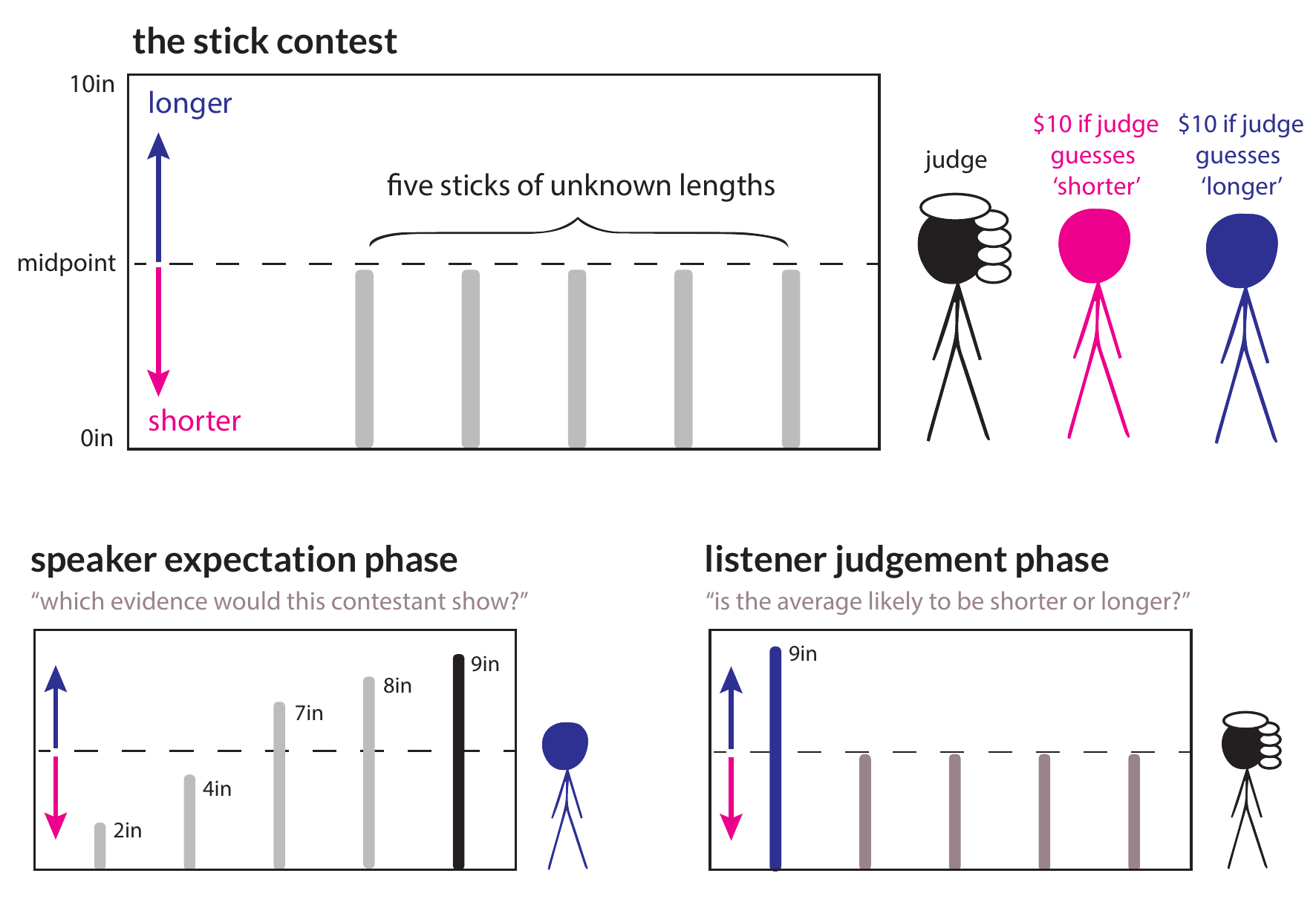}
  
    \caption{ In the Stick Contest paradigm, participants are asked to determine whether a set of five hidden sticks is longer or shorter, on average, than a midpoint (dotted line) based on limited evidence from a pair of contestants. In the \emph{speaker expectation} phase (left), participants were asked which one of the five sticks a given contestant would be most likely to show. In the \emph{listener judgment} phase (right), participants were presented with a sequence of sticks from each contestant and asked to judge the likelihood that the overall sample is ``longer.''}
    \label{fig:stickStimulus}
\end{figure}

\section{Experiment: The Stick Contest}

Empirical studies of the weak evidence effect require a cover story to elicit belief judgments and manipulate the strength of evidence.
Typically, this cover story is based on a real-world scenario such as a jury trial \citep{mckenzie_when_2002} or public policy debate \citep{fernbach_when_2011}, where participants are asked to report their belief in a hypothetical state such as the defendant's guilt or the effectiveness of the policy intervention. 
While these cover stories are naturalistic, they also introduce several complications for evaluating models of belief updating: participants may bring in different baseline expectations based on world knowledge and the absolute scalar argument strength of verbal statements is often unclear.
To address these concerns, we introduce a simple behavioral paradigm called \emph{the Stick Contest} (see Fig.~\ref{fig:stickStimulus}).
This game is inspired by a courtroom scenario: two contestants take turns presenting competing evidence to a judge, who must ultimately issue a verdict.
Here, however, the verdict concerns the average length of $N=5$ sticks which range from a minimum length of 1'' to a maximum length of 9''.
These sticks are hidden from the judge but visible to both contestants, who are each given an opportunity to reveal exactly one stick as evidence for their case.
As in a courtroom, each contestant has a clear agenda that is known to the judge: one contestant is rewarded if the judge determines that the average length of the sticks is longer than the midpoint of 5'' (shown as a dotted line in Fig.~\ref{fig:stickStimulus}), and the other is rewarded if the judge determines that the average length of the sticks is shorter than the midpoint.

This paradigm has several advantages for comparing models of the weak evidence effect.
First, unlike verbal statements of evidence, the scale of evidence strength is made explicit and provided as common knowledge to the judge and contestants. 
The strength of a given piece of evidence is directly proportional to the length of the revealed stick, and these lengths are bounded between the minimum and maximum values. 
Second, while previous paradigms have operationalized the weak evidence effect in terms of a sequence of belief updates across multiple pieces of evidence (e.g. where the first piece of evidence sets a baseline for the second piece of evidence), common knowledge about the scale allows the weak evidence effect to emerge from a single piece of evidence.
This property helps to disentangle the core mechanisms driving the weak evidence effect from those driving \emph{order effects} \cite[e.g.][]{trueblood2011quantum}.

\subsection{Participants}
We recruited $804$ participants from the Prolific crowd-sourcing platform,  $723$ of whom successfully completed the task and passed attention checks (see Appendix A).
The task took approximately 5 to 7 minutes, and each participant was paid \$1.40 for an average hourly rate of \$14.
We restricted recruitment to the USA, UK, and Canada and balanced recruitment evenly between male and female participants. 
Participants were not allowed to complete the task on mobile or to complete the experiment more than once.

\subsection{Design and procedure}

The experiment proceeded in two phases: first, a \emph{speaker expectation} phase, and second, a \emph{listener judgment} phase (see~Fig.~\ref{fig:stickStimulus}). 
In the speaker expectation phase, we placed participants in the role of the contestants, gave them an example set of sticks $\{2,4,7,8,9\}$ and asked them which ones they believed each contestant would choose to show, in order of priority.
In the listener judgment phase, we placed participants in the role of the judge and presented them with a sequence of observations. 
After each observation, they used a slider to indicate their belief about the verdict on a scale ranging from 0 (``average is definitely shorter than five inches'') to 100 (``average is definitely longer than five inches'').
It was stated explicitly that the judge knows that there are exactly five sticks, and that each contestant's incentives are public knowledge. 
After each phase, we asked participants to explain their response in a free-response box (see Tables S2-S3 for sample responses).

This within-participant design allowed us to examine individual co-variation between the strength of a participant's weak evidence effect in the listener judgment phase and their beliefs about the evidence generation process in the speaker expectation phase.  
Critically, while the set of candidate sticks in the speaker expectation phase was held constant across all participants for consistency, the strength of evidence we presented in the listener judgment phase was manipulated in a between-subjects design.
The length of the first piece of evidence was chosen from the set $\{6, 7, 8, 9\}$ when the long-biased contestant went first, and from the set $\{4, 3, 2, 1\}$ when the short-biased contestant went first, for a total of 4 possible ``strength'' conditions (measured as the distance of the observation from the midpoint; we assigned more participants to the more theoretically important ``weak evidence'' condition, i.e. $\{4, 6\}$, to obtain a higher-powered estimate).
The order of contestants was counterbalanced across participants and held constant across the speaker and listener phase.\footnote{An earlier iteration of our experiment only used a \texttt{long}-biased speaker; we report results from this version in Appendix D.}
Although it was not the focus of the current study, we also presented a second piece of evidence from the other contestant to capture potential order effects (see Appendix B for preliminary analyses).

\section{Results}

\subsection{Behavioral results}

\begin{figure}[t]
    \centering
        \captionsetup{font={small,stretch=.75}, strut=on}

    \includegraphics[width=0.7\linewidth]{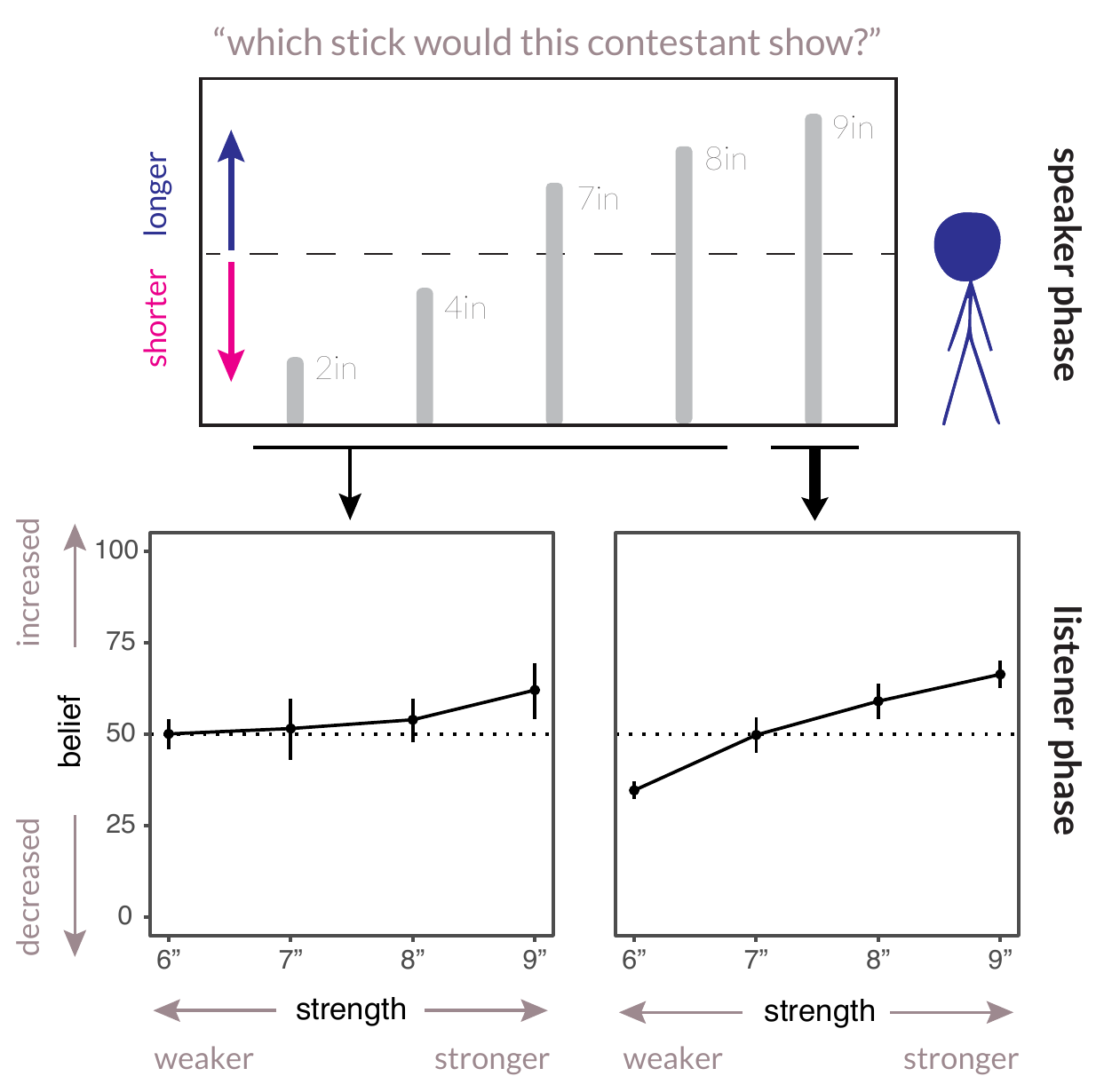}
    \caption{Individual differences in the weak evidence effect are predicted by pragmatic expectations. Dotted line represents neutral or unchanged beliefs. Error bars are bootstrapped 95\% CIs (see Fig.~S3 for raw distributions).}
    \label{fig:behavior}
\end{figure}

Before quantitatively evaluating our model, we first examine its key qualitative predictions. 
Do participants exhibit a weak evidence effect in their listener judgments at all, and if so, to what extent is variation in the strength of the effect related to their expectations about the speaker? 
We focus on each participant's first judgment, provided after the first piece of evidence in the listener phase. 
This judgment provides the clearest view of the weak evidence effect, as subsequent judgments may be complicated by order effects.
We constructed a linear regression model predicting participants' continuous slider responses. 
We included fixed effects of evidence strength as well as expectations from the speaker phase (coded as a categorical variable, expecting strongest evidence vs. expecting weaker evidence), and their interaction, along with a fixed effect of whether the first contestant was ``short''-biased or ``long''-biased. 
Because the design was fully between-participant (i.e. each participant only provided a single slider response as judge), no random effects were supported.

As predicted, we found a significant interaction between speaker expectations and evidence strength, $t(718) = 5.2,~p < 0.001;$ see Fig.~\ref{fig:behavior}.
For participants who expected the speaker to provide the strongest evidence (485 participants or 67\% of the sample), weak evidence in favor of the persuasive goal backfired and actually pushed beliefs in the opposite direction, $m = 34.7$,~95\% CI: $[32.3, 37.3], p < 0.001$. 
Meanwhile, for participants who expected speakers to ``hedge'' and not necessarily show the strongest evidence first (238 participants, or 33\% of the sample), no weak evidence effect was found ($m = 50.1$, group difference $= -15.4$,~post-hoc $t(367) = -6.3,~p<0.001$.)
We found only a marginally significant asymmetry in slider bias, $p=0.056$, with short-biased participants giving slightly larger endorsements ($m=1.6$ slider points) across the board.

\subsection{Model simulations}

\begin{figure}[t]
    \captionsetup{font={small,stretch=.75}, strut=on}

    \centering
    \includegraphics[scale=0.35]{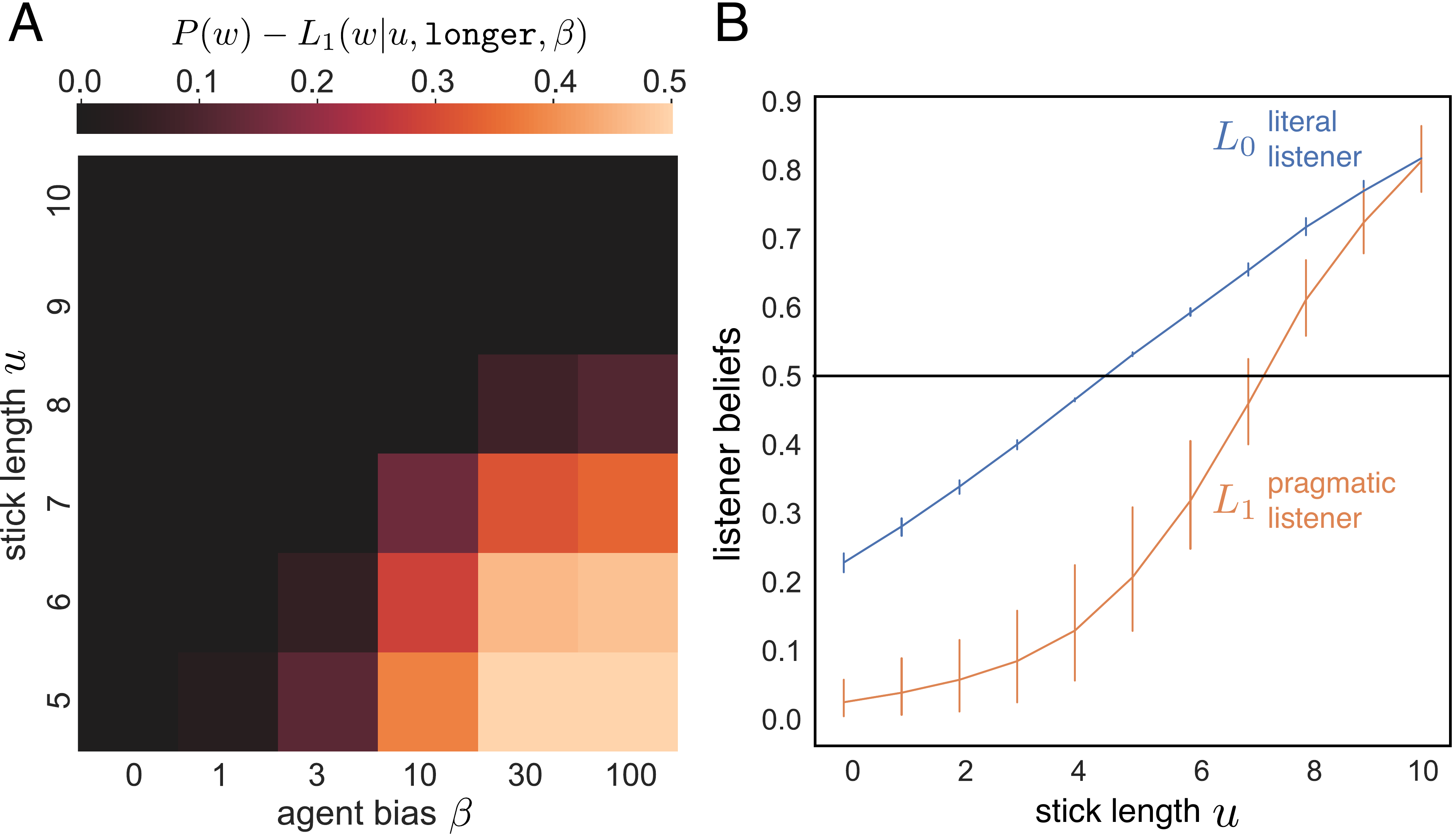}
    \caption{\emph{Model simulations}. (A) Our pragmatic listener model predicts a weak evidence effect for a broader range of evidence strengths at higher perceived speaker bias $\beta$. The color scale represents the extent to which the listener's posterior beliefs decrease in light of positive evidence, where the black region represents conditions under which no weak evidence effect is predicted. (B) Posterior beliefs of literal and pragmatic listener models as a function of evidence from long-biased speaker. Horizontal line represents prior beliefs. Error bars are given by 10-fold cross-validation across parameter fits on different subsets of our behavior data, with average $\overline{\beta} = 2.03$ and response offset $\overline{o} = -0.13$ (translating the curve down).}
    \label{fig:sim-exp}
\end{figure}

The qualitative effect observed the previous section is consistent with our pragmatic account: weak evidence only backfired for participants who expected speakers to provide the strongest available.
In this section we conduct a series of simulations to explicitly examine the conditions under which this effect emerges from our model of recursive social reasoning between a speaker (who selects the evidence) and a listener (who updates their beliefs in light of the evidence).
Our task is naturally formalized by defining the possible utterances $u\in \mathcal{U}$ as the possible lengths of individual sticks the speaker must choose between, the world state $w$ as the true set of sticks, and the persuasive goals $w^*  \in \{\texttt{longer}, \texttt{shorter}\}$ as a binary proposition corresponding to each speaker's incentive.
Because the speaker only has access to true utterances, all utterances have equal epistemic utility (i.e. the speaker must show one of the five \emph{actual} sticks,\footnote{For related tasks studying outright lying, see \citet{oey_designing_2019, ransom2017cognitive,franke2020strategies} and \citet{oey2021lies}. For a more comprehensive and multidisciplinary overview of varieties of deception and misleading, see \citet{saul2012lying,meibauer2019oxford}.} which has the epistemic effect of reducing uncertainty about the identity of exactly one stick).
Hence, the combined utility (Eq.~\ref{eq:combU}) simplifies to the following:
\begin{align}
    S(u \mid w, w^*, \beta) & \propto \exp\{\alpha \cdot \beta \cdot \ln L_0(w^* \mid u)\}\label{eq:simplified}
\end{align}
and the persuasive utility of an utterance is monotonic in the stick length (see Appendix C for complete proofs).
Note that when ${\beta = 0}$, the pragmatic listener $L_1$ expects the speaker preferences to be uniform over true evidence, $S_1(u \mid w, w^*, \beta=0) = \textrm{Unif}(u)$, thus reducing to the literal listener $L_0$.
When $\beta \rightarrow \infty$, the pragmatic listener expects the speaker to maximize utility and choose the single strongest piece of evidence.\footnote{Because the product $\alpha \cdot \beta$ is non-zero only if the persuasion weight $\beta$ is non-zero, these two parameters are redundant in our task. We thus treat their product as a single free parameter, effectively fixing $\alpha = 1$. It is possible that a near-zero $\alpha$ (e.g. low effort from participants) may make it difficult to empirically detect a non-zero $\beta$ term in our model comparison below, but this would work against our hypothesis.}

In our simulations, we present the listener models with different pieces of evidence $u \in \{5,6,7,8,9,10\}$ and manipulate  $\beta$, which represents the degree to which the pragmatic listener $L_1$ expects the speaker $S$ to be motivated to show data that prefers target goal state $w^*=\texttt{longer}$ (the case for $\texttt{shorter}$ is analogous).
We operationalize the size of the weak evidence effect as the decrease in belief for a proposition given positive evidence supporting that proposition.
For example, if observing a stick length of 6'' \emph{decreased} the listener's beliefs that the sample was longer than 5'' from a prior belief of $P(\texttt{longer}) = 0.5$ to a posterior belief of $P(\texttt{longer} \mid u = 6)=0.4$, then we say the size of the effect is $0.5-0.4=0.1$. 

First, we observe that when $\beta = 0$ (Fig.~\ref{fig:sim-exp}A, left-most column), no weak evidence effect is observed: the listener interprets the evidence literally.
However, as the perceived bias of the speaker increases, we observe a weak evidence effect emerge for shorter sticks. 
When the perceived bias grows large (e.g. $\beta = 100$, right-most column), the weak evidence effect is found over a broad range of evidence: if the listener expects the speaker to show the single strongest piece of evidence available, then even a stick of length 8'' rules out the existence of any stronger evidence, shifting the possible range of sticks in the sample.
To further understand this effect, we computed the beliefs of literal ($J_0$) and pragmatic ($J_1$) listener models as a function of the evidence they've been shown (Fig.~\ref{fig:sim-exp}B).
While the literal listener predicts a near-linear shift in beliefs as a function of positive or negative evidence, the pragmatic listener yields a sharper S-shaped curve reflecting more skeptical belief updating.

\subsection{Quantitative Model Comparison}

Our behavioral results suggest an important role for speaker expectations in explanations of the weak evidence effect, and our simulations reveal how a pragmatic listener model derives this effect from different expectations about speaker bias.
In this section, we compare our model against alternative accounts by fitting them to our empirical data (see Appendix E for details). 

\paragraph{Fitting the RSA model to behavioral data}

We considered several variants of the RSA model, which handled the relationship between the speaker and listener phase in different ways. 
The simplest variant, which we call the \emph{homogeneous} model, assumes the entire population of participants is explained by a pragmatic model ($z=L_1$) with an unknown bias.
It is homogeneous because the same model is assumed to be shared across the whole population.
The second variant, which we call the \emph{heterogeneous} model, is a mixture model where we predicted each participant's response as a convex combination of the $J_0$ and $J_1$ models with mixture weight $p_z$ (i.e. marginalizing out latent assignments $z_i$).
In the third variant, which we call the \emph{speaker-dependent} model, we explicitly fit different mixture weights depending on the participant's response in the speaker expectations phase.
Rather than learning a single mixture weight for the entire population, this variant learns independent mixture weights for different sub-groups $z_{j}$, defined by the different sticks $j$ that participants chose in the speaker phase.
This model asks whether conditioning on speaker data allows the model to make sufficiently better predictions about the listener data.

\paragraph{Fitting anchor-and-adjust models to empirical data}

The most prominent family of \emph{asocial} models accounting for the weak evidence effect are \emph{anchor-and-adjust} (AA) models.
In these models, individuals compare the strength of new evidence $u$ against a reference point $R$ and adjust their beliefs $P(w|u)$ up or down accordingly:
\begin{equation}\label{eq:aa}
    P(w|u) = P(w) + \eta \cdot (s(u) - R),
\end{equation}
where $s(u)$ is the strength of the evidence, and $\eta$ is an adjustment weight. 
In the simplest variant \citep{hogarth1992order}, the reference point and scaling are fixed to a neutral baseline~${\eta=P(w)=1-P(w)=.5}$ and~${R=0}$.
In a more complex variant, beliefs are not updated from a \emph{neutral} baseline but instead relative to more stringent level known as the argument's ``minimum acceptable strength'' \citep[MAS;][]{mckenzie_when_2002}, which is treated as a free parameter:  $R\sim \text{Unif}[-1, 1]$.
In this case, positive evidence that falls short of $R$ may nonetheless be treated as negative evidence and decrease the listener's beliefs. 
Although the anchor is typically taken to be a specific earlier observation, it may be interpreted in the single-observation case as the participant's implicit or imagined expectations from the task instructions and cover story.
Prior work using anchor-and-adjust models would not predict a relationship between behavior in the speaker phase and in the listener phase.
We thus evaluated a homogeneous \emph{AA} model, a homogeneous \emph{MAS} model, and a heterogeneous mixture model predicting responses as a convention combination of the two.

\begin{table}[t]
    \captionsetup{font={small,stretch=.75}, strut=on}
    \centering
    \begin{tabular}{c|c||c||c||c}
        Model & Variant & Likelihood
                        & WAIC 
                        & PSIS-LOO \\
        \hline \hline
        A\&A & Homogeneous & -28.1 & 57.7 $\pm$ 9.9 & 28.8 $\pm$ 9.9 \\
        MAS & Homogeneous & 8.2 & -13.3 $\pm$ 9.6 & -6.6 $\pm$ 9.6 \\
         & Heterogeneous & 8.2 & -11.3 $\pm$ 9.5 & -5.6 $\pm$ 9.5 \\
        RSA & Homogeneous & 8.1 & -13.3 $\pm$ 9.5 & -6.7 $\pm$ 9.5 \\
         & Heterogeneous & 8.1 & -10.5 $\pm$ 9.3 & -5.2 $\pm$ 9.3 \\
         & Speaker-dependent & \textbf{12.0} & \textbf{-16.4} $\pm$ 9.1 & \textbf{-9.2} $\pm$ 9.1
    \end{tabular}
    \vspace{1em}
    \caption{Results of the model comparison, including the likelihood achieved by the best-fitting model as well as the WAIC, and PSIS-LOO ($\pm$ standard error), which penalize for model complexity.}
    \label{tab:model_comparison}
\end{table}

\paragraph{Comparison Results} We examined several metrics to assess the relative performance of these models.\footnote{All models were implemented in WebPPL \citep{dippl}; code for reproducing these analyses is available at \texttt{\href{https://github.com/s-a-barnett/bayesian-persuasion}{https://github.com/s-a-barnett/bayesian-persuasion}}.}
First, as an absolute goodness of fit measure, we found the parameters that maximized the model likelihood (see Table~\ref{tab:model_comparison}).
As a Bayesian alternative, which penalizes models for added complexity, we also considered a measure using the full posterior,\footnote{We drew 1,000 samples from the posterior via MCMC across four chains, with a burn-in of 7,500 steps and a lag of 100 steps between samples.} the Watanabe-Akaike (or Widely Applicable) Information Criterion \citep{watanabe2013widely, gelman2013bayesian}. 
The WAIC penalizes model flexibility in a way that asymptotically equates to Bayesian leave-one-out (LOO) cross-validation \citep{acerbi2018bayesian, gelman2013bayesian}, which we also include in the form of the PSIS-LOO measure \citep[PSIS stands for Pareto Smoothed Importance Sampling, a method for stabilizing estimates][]{vehtari2017practical}. 
These comparison criteria (Table~\ref{tab:model_comparison}) suggest that the added complexity of the speaker-dependent RSA model is justified: it outperforms all \emph{asocial} variants. 
For this speaker-dependent model, we found a maximum \emph{a posteriori} (MAP) estimate of $\hat{\beta}=2.26$, providing strong support for a non-zero persuasive bias term.
We found that the pragmatic  $J_1$ model best explained the judgments of participants who expected the strongest evidence to be shown during the speaker phase (mixture weight $\hat{p}_{z} = 0.99$) while the literal $J_0$ model best explained the judgments of participants who expected weaker sticks to be shown (mixture weight $\hat{p}_{z} = 0.1$).
Full parameter posteriors are shown in Fig.~S5.

\section{Discussion}

Evidence is not a direct reflection of the world: it comes from somewhere, often from other people.
Yet appropriately accounting for social sources of information has posed a challenge for models of belief-updating, even as increasing attention has been given to the role of pragmatic reasoning in classic phenomena.
In this paper, we formalized a pragmatic account of the \emph{weak evidence effect} via a model of recursive social reasoning, where weaker evidence may backfire when the speaker is expected to have a persuasive agenda.
This model critically predicts that individual differences in the weak evidence effect should be related to individual differences in how the speaker is expected to select evidence. 
We evaluated this qualitative prediction using a novel behavioral paradigm -- the Stick Contest -- and demonstrated through simulations and quantitative model comparisons that our model uniquely captures this source of variance in judgments.

Several avenues remain important for future work.
First, while we focused on the initial judgment as the purest manifestation of the weak evidence effect, subsequent judgments are consistent with the \emph{order effects} that have been the central focus of previous accounts \citep[see Appendix B;][]{anderson1981foundations, davis1984order, trueblood2011quantum}.
Thus, we view our model of social reasoning as capturing an orthogonal aspect of the phenomenon, and further work should explicitly integrate computational-level principles of social reasoning with process-level mechanisms of sequential belief updating. 
Second, our model provides a foundation for accounting for related \emph{message involvement} effects (e.g., emotion, attractiveness of source), \emph{presentation} effects (e.g. numerical vs. verbal descriptions), and \emph{social affiliation} effects (i.e., whether the source is in-group) that have been examined in real-world settings of persuasion \citep[e.g.][]{martire2014interpretation,debono1988source,bohner2002expertise,park2007effects,cialdini1993influence, falk_persuasion_2018}, 
These settings also involve uncertainty about the \emph{scale} of possible argument strength, unlike the clearly defined interval of lengths in our paradigm.
Third, while the weak evidence effect emerges after a single level of social recursion, it is natural to ask what happens at higher levels: what about a more sophisticated speaker who is \emph{aware} that weak evidence may lead to such inferences?
Our paradigm explicitly informed participants of the speaker bias, but uncertainty about the speaker's hidden agenda may give rise to a \emph{strong} evidence effect \citep{perfors_stronger_2018}, where speakers are motivated to \emph{avoid} the strongest arguments to appear more neutral (see Appendix E).
Based on the self-explanations we elicited (Table S2), it is possible that some participants who expected less strong evidence were reasoning in this way. 
These individual differences are consistent with prior work reporting heterogeneity in levels of reasoning in other communicative tasks \cite[e.g.][]{franke2016reasoning}. 


We used a within-participant individual differences design for simplicity and naturalism, but there are also limitations associated with this design choice.
For example, it is possible that the group of participants who expected weaker evidence to be shown first could be systematically different from the other group in some way, such as differing levels of inattention or motivation, that explains their behavior on \emph{both} speaker and listener trials.
We aimed to control for these factors in multiple ways, including strict attention checks (Appendix A) and self-explanations (Tables S2-S3), which suggest a thoughtful rationale for expecting weaker evidence.
However, an alternative solution would be to explicitly manipulate  social expectations about the speaker in the cover story (e.g. training participants on speakers that tend to show weaker or stronger evidence first).
Such a design would license stronger causal inferences, but would also raise new concerns about exactly what is being manipulated. 
A second limitation of our design is that the speaker phase was always presented before the listener phase. 
It is already known that the order of these roles may affect participants' reasoning \cite[e.g.][]{sikos2021speak,shafto_rational_2014}, but asocial accounts of the weak evidence effect would not predict any relationship between speaker and listener trials under \emph{either} order. 
Hence, we chose the order we thought would minimize confusion about the task; it is not our goal to suggest that social reasoning is spontaneous or mandatory, and we expect that social-pragmatic factors may be more salient in some contexts than others \cite[e.g. when evidence is presented verbally vs. numerically, as in][]{martire2014interpretation}.

Probabilistic models have continually emphasized the importance of the data generating process, distinguishing between assumptions like \emph{weak} sampling, \emph{strong} sampling, and \emph{pedagogical} sampling  \citep{hsu2009differential, shafto_rational_2014, tenenbaum1999bayesian, tenenbaum2001generalization}. 
Our work considers a fourth sampling assumption, \emph{rhetorical sampling}, where the data are not necessarily generated in the service of pedagogy but rather in the service of persuasive rhetoric.
Critically, although we formalized this account in a recursive Bayesian reasoning framework, insights about rhetorical sampling are also compatible with other frameworks: for example, work in the anchor-and-adjust framework may use similar principles to derive a relationship between information sources and reference points. 
Such socially sensitive objectives may be particularly key in the context of developing artificial agents that are more closely aligned with human values \citep{irving_ai_2018,carroll2019utility, hilgard_learning_2019}.
As we navigate an information landscape increasingly filled with disinformation from adversarial sources, a heightened sense of skepticism may be rational after all.

\acknowledgments
This work was supported by grant \#62220 from the John Templeton Foundation to TG.
RDH is funded by a C.V. Starr Postdoctoral Fellowship and NSF SPRF award \#1911835.
We are grateful for early contributions by Mark Ho and helpful conversations with other members of the Princeton Computational Cognitive Science Lab, as well as Ryan Adams and members of the Laboratory for Intelligent Probabilistic Systems. 

\bibliography{refs.bib}

\begin{thebibliography}{}

\bibitem [\protect \citeauthoryear {%
Acerbi%
, Dokka%
, Angelaki%
\BCBL {}\ \BBA {} Ma%
}{%
Acerbi%
\ \protect \BOthers {.}}{%
{\protect \APACyear {2018}}%
}]{%
acerbi2018bayesian}
\APACinsertmetastar {%
acerbi2018bayesian}%
\begin{APACrefauthors}%
Acerbi, L.%
, Dokka, K.%
, Angelaki, D\BPBI E.%
\BCBL {}\ \BBA {} Ma, W\BPBI J.%
\end{APACrefauthors}%
\unskip\
\newblock
\APACrefYearMonthDay{2018}{}{}.
\newblock
{\BBOQ}\APACrefatitle {Bayesian comparison of explicit and implicit causal
  inference strategies in multisensory heading perception} {Bayesian comparison
  of explicit and implicit causal inference strategies in multisensory heading
  perception}.{\BBCQ}
\newblock
\APACjournalVolNumPages{PLOS Computational Biology}{14}{7}{e1006110}.
\PrintBackRefs{\CurrentBib}

\bibitem [\protect \citeauthoryear {%
Anderson%
}{%
Anderson%
}{%
{\protect \APACyear {1981}}%
}]{%
anderson1981foundations}
\APACinsertmetastar {%
anderson1981foundations}%
\begin{APACrefauthors}%
Anderson, N\BPBI H.%
\end{APACrefauthors}%
\unskip\
\newblock
\APACrefYear{1981}.
\newblock
\APACrefbtitle {Foundations of information integration theory} {Foundations of
  information integration theory}.
\newblock
\APACaddressPublisher{New York, NY}{Academic Press}.
\PrintBackRefs{\CurrentBib}

\bibitem [\protect \citeauthoryear {%
Bagassi%
\ \BBA {} Macchi%
}{%
Bagassi%
\ \BBA {} Macchi%
}{%
{\protect \APACyear {2006}}%
}]{%
bagassi2006pragmatic}
\APACinsertmetastar {%
bagassi2006pragmatic}%
\begin{APACrefauthors}%
Bagassi, M.%
\BCBT {}\ \BBA {} Macchi, L.%
\end{APACrefauthors}%
\unskip\
\newblock
\APACrefYearMonthDay{2006}{}{}.
\newblock
{\BBOQ}\APACrefatitle {Pragmatic approach to decision making under uncertainty:
  The case of the disjunction effect} {Pragmatic approach to decision making
  under uncertainty: The case of the disjunction effect}.{\BBCQ}
\newblock
\APACjournalVolNumPages{Thinking \& Reasoning}{12}{3}{329--350}.
\PrintBackRefs{\CurrentBib}

\bibitem [\protect \citeauthoryear {%
Baker%
, Jara-Ettinger%
, Saxe%
\BCBL {}\ \BBA {} Tenenbaum%
}{%
Baker%
\ \protect \BOthers {.}}{%
{\protect \APACyear {2017}}%
}]{%
baker2017rational}
\APACinsertmetastar {%
baker2017rational}%
\begin{APACrefauthors}%
Baker, C\BPBI L.%
, Jara-Ettinger, J.%
, Saxe, R.%
\BCBL {}\ \BBA {} Tenenbaum, J\BPBI B.%
\end{APACrefauthors}%
\unskip\
\newblock
\APACrefYearMonthDay{2017}{}{}.
\newblock
{\BBOQ}\APACrefatitle {Rational quantitative attribution of beliefs, desires
  and percepts in human mentalizing} {Rational quantitative attribution of
  beliefs, desires and percepts in human mentalizing}.{\BBCQ}
\newblock
\APACjournalVolNumPages{Nature Human Behaviour}{1}{4}{1--10}.
\PrintBackRefs{\CurrentBib}

\bibitem [\protect \citeauthoryear {%
Bhui%
\ \BBA {} Gershman%
}{%
Bhui%
\ \BBA {} Gershman%
}{%
{\protect \APACyear {2020}}%
}]{%
bhui2020paradoxical}
\APACinsertmetastar {%
bhui2020paradoxical}%
\begin{APACrefauthors}%
Bhui, R.%
\BCBT {}\ \BBA {} Gershman, S\BPBI J.%
\end{APACrefauthors}%
\unskip\
\newblock
\APACrefYearMonthDay{2020}{}{}.
\newblock
{\BBOQ}\APACrefatitle {Paradoxical effects of persuasive messages.}
  {Paradoxical effects of persuasive messages.}{\BBCQ}
\newblock
\APACjournalVolNumPages{Decision}{7}{4}{239--258}.
\PrintBackRefs{\CurrentBib}

\bibitem [\protect \citeauthoryear {%
Bohn%
, Tessler%
, Merrick%
\BCBL {}\ \BBA {} Frank%
}{%
Bohn%
\ \protect \BOthers {.}}{%
{\protect \APACyear {2021}}%
}]{%
bohn2021young}
\APACinsertmetastar {%
bohn2021young}%
\begin{APACrefauthors}%
Bohn, M.%
, Tessler, M\BPBI H.%
, Merrick, M.%
\BCBL {}\ \BBA {} Frank, M\BPBI C.%
\end{APACrefauthors}%
\unskip\
\newblock
\APACrefYearMonthDay{2021}{}{}.
\newblock
{\BBOQ}\APACrefatitle {How young children integrate information sources to
  infer the meaning of words} {How young children integrate information sources
  to infer the meaning of words}.{\BBCQ}
\newblock
\APACjournalVolNumPages{Nature Human Behaviour}{5}{8}{1046--1054}.
\PrintBackRefs{\CurrentBib}

\bibitem [\protect \citeauthoryear {%
Bohner%
, Ruder%
\BCBL {}\ \BBA {} Erb%
}{%
Bohner%
\ \protect \BOthers {.}}{%
{\protect \APACyear {2002}}%
}]{%
bohner2002expertise}
\APACinsertmetastar {%
bohner2002expertise}%
\begin{APACrefauthors}%
Bohner, G.%
, Ruder, M.%
\BCBL {}\ \BBA {} Erb, H\BHBI P.%
\end{APACrefauthors}%
\unskip\
\newblock
\APACrefYearMonthDay{2002}{}{}.
\newblock
{\BBOQ}\APACrefatitle {When expertise backfires: Contrast and assimilation
  effects in persuasion} {When expertise backfires: Contrast and assimilation
  effects in persuasion}.{\BBCQ}
\newblock
\APACjournalVolNumPages{British Journal of Social Psychology}{41}{4}{495--519}.
\PrintBackRefs{\CurrentBib}

\bibitem [\protect \citeauthoryear {%
Bonawitz%
\ \protect \BOthers {.}}{%
Bonawitz%
\ \protect \BOthers {.}}{%
{\protect \APACyear {2011}}%
}]{%
bonawitz2011double}
\APACinsertmetastar {%
bonawitz2011double}%
\begin{APACrefauthors}%
Bonawitz, E.%
, Shafto, P.%
, Gweon, H.%
, Goodman, N\BPBI D.%
, Spelke, E.%
\BCBL {}\ \BBA {} Schulz, L.%
\end{APACrefauthors}%
\unskip\
\newblock
\APACrefYearMonthDay{2011}{}{}.
\newblock
{\BBOQ}\APACrefatitle {The double-edged sword of pedagogy: Instruction limits
  spontaneous exploration and discovery} {The double-edged sword of pedagogy:
  Instruction limits spontaneous exploration and discovery}.{\BBCQ}
\newblock
\APACjournalVolNumPages{Cognition}{120}{3}{322--330}.
\PrintBackRefs{\CurrentBib}

\bibitem [\protect \citeauthoryear {%
Carroll%
\ \protect \BOthers {.}}{%
Carroll%
\ \protect \BOthers {.}}{%
{\protect \APACyear {2019}}%
}]{%
carroll2019utility}
\APACinsertmetastar {%
carroll2019utility}%
\begin{APACrefauthors}%
Carroll, M.%
, Shah, R.%
, Ho, M\BPBI K.%
, Griffiths, T.%
, Seshia, S.%
, Abbeel, P.%
\BCBL {}\ \BBA {} Dragan, A.%
\end{APACrefauthors}%
\unskip\
\newblock
\APACrefYearMonthDay{2019}{}{}.
\newblock
{\BBOQ}\APACrefatitle {On the Utility of Learning about Humans for Human-{AI}
  Coordination} {On the utility of learning about humans for human-{AI}
  coordination}.{\BBCQ}
\newblock
\BIn{} \APACrefbtitle {Advances in {Neural Information Processing Systems}}
  {Advances in {Neural Information Processing Systems}}\ (\BPGS\ 5175--5186).
\PrintBackRefs{\CurrentBib}

\bibitem [\protect \citeauthoryear {%
Cialdini%
}{%
Cialdini%
}{%
{\protect \APACyear {1993}}%
}]{%
cialdini1993influence}
\APACinsertmetastar {%
cialdini1993influence}%
\begin{APACrefauthors}%
Cialdini, R\BPBI B.%
\end{APACrefauthors}%
\unskip\
\newblock
\APACrefYear{1993}.
\newblock
\APACrefbtitle {Influence: The Psychology of Persuasion} {Influence: The
  psychology of persuasion}.
\newblock
\APACaddressPublisher{New York}{Morrow}.
\PrintBackRefs{\CurrentBib}

\bibitem [\protect \citeauthoryear {%
Dasgupta%
, Schulz%
\BCBL {}\ \BBA {} Gershman%
}{%
Dasgupta%
\ \protect \BOthers {.}}{%
{\protect \APACyear {2017}}%
}]{%
dasgupta2017hypotheses}
\APACinsertmetastar {%
dasgupta2017hypotheses}%
\begin{APACrefauthors}%
Dasgupta, I.%
, Schulz, E.%
\BCBL {}\ \BBA {} Gershman, S\BPBI J.%
\end{APACrefauthors}%
\unskip\
\newblock
\APACrefYearMonthDay{2017}{}{}.
\newblock
{\BBOQ}\APACrefatitle {Where do hypotheses come from?} {Where do hypotheses
  come from?}{\BBCQ}
\newblock
\APACjournalVolNumPages{Cognitive Psychology}{96}{}{1--25}.
\PrintBackRefs{\CurrentBib}

\bibitem [\protect \citeauthoryear {%
Davis%
}{%
Davis%
}{%
{\protect \APACyear {1984}}%
}]{%
davis1984order}
\APACinsertmetastar {%
davis1984order}%
\begin{APACrefauthors}%
Davis, J\BPBI H.%
\end{APACrefauthors}%
\unskip\
\newblock
\APACrefYearMonthDay{1984}{}{}.
\newblock
{\BBOQ}\APACrefatitle {Order in the courtroom} {Order in the courtroom}.{\BBCQ}
\newblock
\APACjournalVolNumPages{Psychology and Law}{}{}{251--265}.
\PrintBackRefs{\CurrentBib}

\bibitem [\protect \citeauthoryear {%
DeBono%
\ \BBA {} Harnish%
}{%
DeBono%
\ \BBA {} Harnish%
}{%
{\protect \APACyear {1988}}%
}]{%
debono1988source}
\APACinsertmetastar {%
debono1988source}%
\begin{APACrefauthors}%
DeBono, K\BPBI G.%
\BCBT {}\ \BBA {} Harnish, R\BPBI J.%
\end{APACrefauthors}%
\unskip\
\newblock
\APACrefYearMonthDay{1988}{}{}.
\newblock
{\BBOQ}\APACrefatitle {Source expertise, source attractiveness, and the
  processing of persuasive information: A functional approach.} {Source
  expertise, source attractiveness, and the processing of persuasive
  information: A functional approach.}{\BBCQ}
\newblock
\APACjournalVolNumPages{Journal of Personality and Social
  Psychology}{55}{4}{541--546}.
\PrintBackRefs{\CurrentBib}

\bibitem [\protect \citeauthoryear {%
Falk%
\ \BBA {} Scholz%
}{%
Falk%
\ \BBA {} Scholz%
}{%
{\protect \APACyear {2018}}%
}]{%
falk_persuasion_2018}
\APACinsertmetastar {%
falk_persuasion_2018}%
\begin{APACrefauthors}%
Falk, E.%
\BCBT {}\ \BBA {} Scholz, C.%
\end{APACrefauthors}%
\unskip\
\newblock
\APACrefYearMonthDay{2018}{}{}.
\newblock
{\BBOQ}\APACrefatitle {Persuasion, Influence, and Value: Perspectives from
  Communication and Social Neuroscience} {Persuasion, influence, and value:
  Perspectives from communication and social neuroscience}.{\BBCQ}
\newblock
\APACjournalVolNumPages{Annual Review of Psychology}{69}{1}{329--356}.
\PrintBackRefs{\CurrentBib}

\bibitem [\protect \citeauthoryear {%
Fernbach%
, Darlow%
\BCBL {}\ \BBA {} Sloman%
}{%
Fernbach%
\ \protect \BOthers {.}}{%
{\protect \APACyear {2011}}%
}]{%
fernbach_when_2011}
\APACinsertmetastar {%
fernbach_when_2011}%
\begin{APACrefauthors}%
Fernbach, P\BPBI M.%
, Darlow, A.%
\BCBL {}\ \BBA {} Sloman, S\BPBI A.%
\end{APACrefauthors}%
\unskip\
\newblock
\APACrefYearMonthDay{2011}{}{}.
\newblock
{\BBOQ}\APACrefatitle {When good evidence goes bad: The weak evidence effect in
  judgment and decision-making} {When good evidence goes bad: The weak evidence
  effect in judgment and decision-making}.{\BBCQ}
\newblock
\APACjournalVolNumPages{Cognition}{119}{3}{459--467}.
\PrintBackRefs{\CurrentBib}

\bibitem [\protect \citeauthoryear {%
Franke%
\ \BBA {} Degen%
}{%
Franke%
\ \BBA {} Degen%
}{%
{\protect \APACyear {2016}}%
}]{%
franke2016reasoning}
\APACinsertmetastar {%
franke2016reasoning}%
\begin{APACrefauthors}%
Franke, M.%
\BCBT {}\ \BBA {} Degen, J.%
\end{APACrefauthors}%
\unskip\
\newblock
\APACrefYearMonthDay{2016}{}{}.
\newblock
{\BBOQ}\APACrefatitle {Reasoning in reference games: Individual-vs.
  population-level probabilistic modeling} {Reasoning in reference games:
  Individual-vs. population-level probabilistic modeling}.{\BBCQ}
\newblock
\APACjournalVolNumPages{PLOS ONE}{11}{5}{e0154854}.
\PrintBackRefs{\CurrentBib}

\bibitem [\protect \citeauthoryear {%
Franke%
, Dulcinati%
\BCBL {}\ \BBA {} Pouscoulous%
}{%
Franke%
\ \protect \BOthers {.}}{%
{\protect \APACyear {2020}}%
}]{%
franke2020strategies}
\APACinsertmetastar {%
franke2020strategies}%
\begin{APACrefauthors}%
Franke, M.%
, Dulcinati, G.%
\BCBL {}\ \BBA {} Pouscoulous, N.%
\end{APACrefauthors}%
\unskip\
\newblock
\APACrefYearMonthDay{2020}{}{}.
\newblock
{\BBOQ}\APACrefatitle {Strategies of Deception: Under-Informativity,
  Uninformativity, and Lies—Misleading With Different Kinds of Implicature}
  {Strategies of deception: Under-informativity, uninformativity, and
  lies—misleading with different kinds of implicature}.{\BBCQ}
\newblock
\APACjournalVolNumPages{Topics in Cognitive Science}{12}{2}{583--607}.
\PrintBackRefs{\CurrentBib}

\bibitem [\protect \citeauthoryear {%
Franke%
\ \BBA {} J{\"a}ger%
}{%
Franke%
\ \BBA {} J{\"a}ger%
}{%
{\protect \APACyear {2016}}%
}]{%
franke2016probabilistic}
\APACinsertmetastar {%
franke2016probabilistic}%
\begin{APACrefauthors}%
Franke, M.%
\BCBT {}\ \BBA {} J{\"a}ger, G.%
\end{APACrefauthors}%
\unskip\
\newblock
\APACrefYearMonthDay{2016}{}{}.
\newblock
{\BBOQ}\APACrefatitle {Probabilistic pragmatics, or why {B}ayes’ rule is
  probably important for pragmatics} {Probabilistic pragmatics, or why
  {B}ayes’ rule is probably important for pragmatics}.{\BBCQ}
\newblock
\APACjournalVolNumPages{Zeitschrift f{\"u}r Sprachwissenschaft}{35}{1}{3--44}.
\PrintBackRefs{\CurrentBib}

\bibitem [\protect \citeauthoryear {%
Gelman%
\ \protect \BOthers {.}}{%
Gelman%
\ \protect \BOthers {.}}{%
{\protect \APACyear {2013}}%
}]{%
gelman2013bayesian}
\APACinsertmetastar {%
gelman2013bayesian}%
\begin{APACrefauthors}%
Gelman, A.%
, Carlin, J\BPBI B.%
, Stern, H\BPBI S.%
, Dunson, D\BPBI B.%
, Vehtari, A.%
\BCBL {}\ \BBA {} Rubin, D\BPBI B.%
\end{APACrefauthors}%
\unskip\
\newblock
\APACrefYear{2013}.
\newblock
\APACrefbtitle {Bayesian data analysis} {Bayesian data analysis}.
\newblock
\APACaddressPublisher{}{CRC Press}.
\PrintBackRefs{\CurrentBib}

\bibitem [\protect \citeauthoryear {%
Goodman%
\ \BBA {} Frank%
}{%
Goodman%
\ \BBA {} Frank%
}{%
{\protect \APACyear {2016}}%
}]{%
goodman_pragmatic_2016}
\APACinsertmetastar {%
goodman_pragmatic_2016}%
\begin{APACrefauthors}%
Goodman, N\BPBI D.%
\BCBT {}\ \BBA {} Frank, M\BPBI C.%
\end{APACrefauthors}%
\unskip\
\newblock
\APACrefYearMonthDay{2016}{}{}.
\newblock
{\BBOQ}\APACrefatitle {Pragmatic language interpretation as probabilistic
  inference} {Pragmatic language interpretation as probabilistic
  inference}.{\BBCQ}
\newblock
\APACjournalVolNumPages{Trends in Cognitive Sciences}{20}{11}{818--829}.
\PrintBackRefs{\CurrentBib}

\bibitem [\protect \citeauthoryear {%
Goodman%
\ \BBA {} Stuhlm{\"u}ller%
}{%
Goodman%
\ \BBA {} Stuhlm{\"u}ller%
}{%
{\protect \APACyear {2013}}%
}]{%
goodman2013knowledge}
\APACinsertmetastar {%
goodman2013knowledge}%
\begin{APACrefauthors}%
Goodman, N\BPBI D.%
\BCBT {}\ \BBA {} Stuhlm{\"u}ller, A.%
\end{APACrefauthors}%
\unskip\
\newblock
\APACrefYearMonthDay{2013}{}{}.
\newblock
{\BBOQ}\APACrefatitle {Knowledge and implicature: Modeling language
  understanding as social cognition} {Knowledge and implicature: Modeling
  language understanding as social cognition}.{\BBCQ}
\newblock
\APACjournalVolNumPages{Topics in Cognitive Science}{5}{1}{173--184}.
\PrintBackRefs{\CurrentBib}

\bibitem [\protect \citeauthoryear {%
Goodman%
\ \BBA {} Stuhlm\"{u}ller%
}{%
Goodman%
\ \BBA {} Stuhlm\"{u}ller%
}{%
{\protect \APACyear {2014}}%
}]{%
dippl}
\APACinsertmetastar {%
dippl}%
\begin{APACrefauthors}%
Goodman, N\BPBI D.%
\BCBT {}\ \BBA {} Stuhlm\"{u}ller, A.%
\end{APACrefauthors}%
\unskip\
\newblock
\APACrefYearMonthDay{2014}{}{}.
\newblock
\APACrefbtitle {{The Design and Implementation of Probabilistic Programming
  Languages}.} {{The Design and Implementation of Probabilistic Programming
  Languages}.}
\newblock
\APAChowpublished {\href{http://dippl.org}{http://dippl.org}}.
\newblock
\APACrefnote{Accessed: 2020-1-7}
\PrintBackRefs{\CurrentBib}

\bibitem [\protect \citeauthoryear {%
Grice%
}{%
Grice%
}{%
{\protect \APACyear {1975}}%
}]{%
grice_logic_1975}
\APACinsertmetastar {%
grice_logic_1975}%
\begin{APACrefauthors}%
Grice, H\BPBI P.%
\end{APACrefauthors}%
\unskip\
\newblock
\APACrefYearMonthDay{1975}{}{}.
\newblock
{\BBOQ}\APACrefatitle {Logic and Conversation} {Logic and conversation}.{\BBCQ}
\newblock
\BIn{} P.~Cole\ \BBA {} J.~Morgan\ (\BEDS), \APACrefbtitle {Syntax and
  semantics, Speech acts} {Syntax and semantics, speech acts}\ (\BVOL~3).
\newblock
\APACaddressPublisher{}{Academic Press}.
\PrintBackRefs{\CurrentBib}

\bibitem [\protect \citeauthoryear {%
Gweon%
, Pelton%
, Konopka%
\BCBL {}\ \BBA {} Schulz%
}{%
Gweon%
\ \protect \BOthers {.}}{%
{\protect \APACyear {2014}}%
}]{%
gweon2014sins}
\APACinsertmetastar {%
gweon2014sins}%
\begin{APACrefauthors}%
Gweon, H.%
, Pelton, H.%
, Konopka, J\BPBI A.%
\BCBL {}\ \BBA {} Schulz, L\BPBI E.%
\end{APACrefauthors}%
\unskip\
\newblock
\APACrefYearMonthDay{2014}{}{}.
\newblock
{\BBOQ}\APACrefatitle {Sins of omission: Children selectively explore when
  teachers are under-informative} {Sins of omission: Children selectively
  explore when teachers are under-informative}.{\BBCQ}
\newblock
\APACjournalVolNumPages{Cognition}{132}{3}{335--341}.
\PrintBackRefs{\CurrentBib}

\bibitem [\protect \citeauthoryear {%
A.~Harris%
, Corner%
\BCBL {}\ \BBA {} Hahn%
}{%
A.~Harris%
\ \protect \BOthers {.}}{%
{\protect \APACyear {2013}}%
}]{%
harris_james_2013}
\APACinsertmetastar {%
harris_james_2013}%
\begin{APACrefauthors}%
Harris, A.%
, Corner, A.%
\BCBL {}\ \BBA {} Hahn, U.%
\end{APACrefauthors}%
\unskip\
\newblock
\APACrefYearMonthDay{2013}{}{}.
\newblock
{\BBOQ}\APACrefatitle {James is polite and punctual (and useless): A {Bayesian}
  formalisation of faint praise} {James is polite and punctual (and useless): A
  {Bayesian} formalisation of faint praise}.{\BBCQ}
\newblock
\APACjournalVolNumPages{Thinking \& Reasoning}{19}{3}{414--429}.
\PrintBackRefs{\CurrentBib}

\bibitem [\protect \citeauthoryear {%
P.~Harris%
, Koenig%
, Corriveau%
\BCBL {}\ \BBA {} Jaswal%
}{%
P.~Harris%
\ \protect \BOthers {.}}{%
{\protect \APACyear {2018}}%
}]{%
harris2018cognitive}
\APACinsertmetastar {%
harris2018cognitive}%
\begin{APACrefauthors}%
Harris, P.%
, Koenig, M\BPBI A.%
, Corriveau, K\BPBI H.%
\BCBL {}\ \BBA {} Jaswal, V\BPBI K.%
\end{APACrefauthors}%
\unskip\
\newblock
\APACrefYearMonthDay{2018}{}{}.
\newblock
{\BBOQ}\APACrefatitle {Cognitive foundations of learning from testimony}
  {Cognitive foundations of learning from testimony}.{\BBCQ}
\newblock
\APACjournalVolNumPages{Annual Review of Psychology}{69}{}{251--273}.
\PrintBackRefs{\CurrentBib}

\bibitem [\protect \citeauthoryear {%
Hawthorne-Madell%
\ \BBA {} Goodman%
}{%
Hawthorne-Madell%
\ \BBA {} Goodman%
}{%
{\protect \APACyear {2019}}%
}]{%
hawthorne2019reasoning}
\APACinsertmetastar {%
hawthorne2019reasoning}%
\begin{APACrefauthors}%
Hawthorne-Madell, D.%
\BCBT {}\ \BBA {} Goodman, N\BPBI D.%
\end{APACrefauthors}%
\unskip\
\newblock
\APACrefYearMonthDay{2019}{}{}.
\newblock
{\BBOQ}\APACrefatitle {Reasoning about social sources to learn from actions and
  outcomes.} {Reasoning about social sources to learn from actions and
  outcomes.}{\BBCQ}
\newblock
\APACjournalVolNumPages{Decision}{6}{1}{17--60}.
\PrintBackRefs{\CurrentBib}

\bibitem [\protect \citeauthoryear {%
Henrich%
}{%
Henrich%
}{%
{\protect \APACyear {2015}}%
}]{%
henrich2015secret}
\APACinsertmetastar {%
henrich2015secret}%
\begin{APACrefauthors}%
Henrich, J.%
\end{APACrefauthors}%
\unskip\
\newblock
\APACrefYear{2015}.
\newblock
\APACrefbtitle {The secret of our success: How culture is driving human
  evolution, domesticating our species, and making us smarter} {The secret of
  our success: How culture is driving human evolution, domesticating our
  species, and making us smarter}.
\newblock
\APACaddressPublisher{}{Princeton University Press}.
\PrintBackRefs{\CurrentBib}

\bibitem [\protect \citeauthoryear {%
Hilgard%
, Rosenfeld%
, Banaji%
, Cao%
\BCBL {}\ \BBA {} Parkes%
}{%
Hilgard%
\ \protect \BOthers {.}}{%
{\protect \APACyear {2021}}%
}]{%
hilgard_learning_2019}
\APACinsertmetastar {%
hilgard_learning_2019}%
\begin{APACrefauthors}%
Hilgard, S.%
, Rosenfeld, N.%
, Banaji, M\BPBI R.%
, Cao, J.%
\BCBL {}\ \BBA {} Parkes, D.%
\end{APACrefauthors}%
\unskip\
\newblock
\APACrefYearMonthDay{2021}{}{}.
\newblock
{\BBOQ}\APACrefatitle {Learning Representations by Humans, for Humans}
  {Learning representations by humans, for humans}.{\BBCQ}
\newblock
\BIn{} M.~Meila\ \BBA {} T.~Zhang\ (\BEDS), \APACrefbtitle {Proceedings of the
  38th {I}nternational {C}onference on {M}achine {L}earning} {Proceedings of
  the 38th {I}nternational {C}onference on {M}achine {L}earning}\ (\BPGS\
  4227--4238).
\PrintBackRefs{\CurrentBib}

\bibitem [\protect \citeauthoryear {%
Hogarth%
\ \BBA {} Einhorn%
}{%
Hogarth%
\ \BBA {} Einhorn%
}{%
{\protect \APACyear {1992}}%
}]{%
hogarth1992order}
\APACinsertmetastar {%
hogarth1992order}%
\begin{APACrefauthors}%
Hogarth, R\BPBI M.%
\BCBT {}\ \BBA {} Einhorn, H\BPBI J.%
\end{APACrefauthors}%
\unskip\
\newblock
\APACrefYearMonthDay{1992}{}{}.
\newblock
{\BBOQ}\APACrefatitle {Order effects in belief updating: The belief-adjustment
  model} {Order effects in belief updating: The belief-adjustment
  model}.{\BBCQ}
\newblock
\APACjournalVolNumPages{Cognitive Psychology}{24}{1}{1--55}.
\PrintBackRefs{\CurrentBib}

\bibitem [\protect \citeauthoryear {%
Hovland%
, Janis%
\BCBL {}\ \BBA {} Kelley%
}{%
Hovland%
\ \protect \BOthers {.}}{%
{\protect \APACyear {1953}}%
}]{%
hovland1953communication}
\APACinsertmetastar {%
hovland1953communication}%
\begin{APACrefauthors}%
Hovland, C\BPBI I.%
, Janis, I\BPBI L.%
\BCBL {}\ \BBA {} Kelley, H\BPBI H.%
\end{APACrefauthors}%
\unskip\
\newblock
\APACrefYear{1953}.
\newblock
\APACrefbtitle {Communication and persuasion.} {Communication and persuasion.}
\newblock
\APACaddressPublisher{}{Yale University Press}.
\PrintBackRefs{\CurrentBib}

\bibitem [\protect \citeauthoryear {%
Hsu%
\ \BBA {} Griffiths%
}{%
Hsu%
\ \BBA {} Griffiths%
}{%
{\protect \APACyear {2009}}%
}]{%
hsu2009differential}
\APACinsertmetastar {%
hsu2009differential}%
\begin{APACrefauthors}%
Hsu, A.%
\BCBT {}\ \BBA {} Griffiths, T\BPBI L.%
\end{APACrefauthors}%
\unskip\
\newblock
\APACrefYearMonthDay{2009}{}{}.
\newblock
{\BBOQ}\APACrefatitle {Differential use of implicit negative evidence in
  generative and discriminative language learning} {Differential use of
  implicit negative evidence in generative and discriminative language
  learning}.{\BBCQ}
\newblock
\BIn{} \APACrefbtitle {Advances in {Neural Information Processing Systems}}
  {Advances in {Neural Information Processing Systems}}\ (\BPGS\ 754--762).
\PrintBackRefs{\CurrentBib}

\bibitem [\protect \citeauthoryear {%
Hsu%
, Horng%
, Griffiths%
\BCBL {}\ \BBA {} Chater%
}{%
Hsu%
\ \protect \BOthers {.}}{%
{\protect \APACyear {2017}}%
}]{%
hsu2017absence}
\APACinsertmetastar {%
hsu2017absence}%
\begin{APACrefauthors}%
Hsu, A.%
, Horng, A.%
, Griffiths, T\BPBI L.%
\BCBL {}\ \BBA {} Chater, N.%
\end{APACrefauthors}%
\unskip\
\newblock
\APACrefYearMonthDay{2017}{}{}.
\newblock
{\BBOQ}\APACrefatitle {When absence of evidence is evidence of absence:
  Rational inferences from absent data} {When absence of evidence is evidence
  of absence: Rational inferences from absent data}.{\BBCQ}
\newblock
\APACjournalVolNumPages{Cognitive Science}{41}{}{1155--1167}.
\PrintBackRefs{\CurrentBib}

\bibitem [\protect \citeauthoryear {%
Irving%
, Christiano%
\BCBL {}\ \BBA {} Amodei%
}{%
Irving%
\ \protect \BOthers {.}}{%
{\protect \APACyear {2018}}%
}]{%
irving_ai_2018}
\APACinsertmetastar {%
irving_ai_2018}%
\begin{APACrefauthors}%
Irving, G.%
, Christiano, P\BPBI F.%
\BCBL {}\ \BBA {} Amodei, D.%
\end{APACrefauthors}%
\unskip\
\newblock
\APACrefYearMonthDay{2018}{}{}.
\newblock
{\BBOQ}\APACrefatitle {{AI} safety via debate} {{AI} safety via debate}.{\BBCQ}
\newblock
\APACjournalVolNumPages{ArXiv}{abs/1805.00899}{}{}.
\PrintBackRefs{\CurrentBib}

\bibitem [\protect \citeauthoryear {%
Jara-Ettinger%
, Gweon%
, Schulz%
\BCBL {}\ \BBA {} Tenenbaum%
}{%
Jara-Ettinger%
\ \protect \BOthers {.}}{%
{\protect \APACyear {2016}}%
}]{%
jara2016naive}
\APACinsertmetastar {%
jara2016naive}%
\begin{APACrefauthors}%
Jara-Ettinger, J.%
, Gweon, H.%
, Schulz, L\BPBI E.%
\BCBL {}\ \BBA {} Tenenbaum, J\BPBI B.%
\end{APACrefauthors}%
\unskip\
\newblock
\APACrefYearMonthDay{2016}{}{}.
\newblock
{\BBOQ}\APACrefatitle {The na{\"\i}ve utility calculus: Computational
  principles underlying commonsense psychology} {The na{\"\i}ve utility
  calculus: Computational principles underlying commonsense psychology}.{\BBCQ}
\newblock
\APACjournalVolNumPages{Trends in Cognitive Sciences}{20}{8}{589--604}.
\PrintBackRefs{\CurrentBib}

\bibitem [\protect \citeauthoryear {%
Lopes%
}{%
Lopes%
}{%
{\protect \APACyear {1987}}%
}]{%
lopes1987procedural}
\APACinsertmetastar {%
lopes1987procedural}%
\begin{APACrefauthors}%
Lopes, L\BPBI L.%
\end{APACrefauthors}%
\unskip\
\newblock
\APACrefYearMonthDay{1987}{}{}.
\newblock
{\BBOQ}\APACrefatitle {Procedural debiasing} {Procedural debiasing}.{\BBCQ}
\newblock
\APACjournalVolNumPages{Acta Psychologica}{64}{2}{167--185}.
\PrintBackRefs{\CurrentBib}

\bibitem [\protect \citeauthoryear {%
Ma%
, Zeng%
, Xu%
, Compton%
\BCBL {}\ \BBA {} Heyman%
}{%
Ma%
\ \protect \BOthers {.}}{%
{\protect \APACyear {2020}}%
}]{%
ma2020delay}
\APACinsertmetastar {%
ma2020delay}%
\begin{APACrefauthors}%
Ma, F.%
, Zeng, D.%
, Xu, F.%
, Compton, B\BPBI J.%
\BCBL {}\ \BBA {} Heyman, G\BPBI D.%
\end{APACrefauthors}%
\unskip\
\newblock
\APACrefYearMonthDay{2020}{}{}.
\newblock
{\BBOQ}\APACrefatitle {Delay of gratification as reputation management} {Delay
  of gratification as reputation management}.{\BBCQ}
\newblock
\APACjournalVolNumPages{Psychological Science}{31}{9}{1174--1182}.
\PrintBackRefs{\CurrentBib}

\bibitem [\protect \citeauthoryear {%
Martire%
, Kemp%
, Sayle%
\BCBL {}\ \BBA {} Newell%
}{%
Martire%
\ \protect \BOthers {.}}{%
{\protect \APACyear {2014}}%
}]{%
martire2014interpretation}
\APACinsertmetastar {%
martire2014interpretation}%
\begin{APACrefauthors}%
Martire, K\BPBI A.%
, Kemp, R\BPBI I.%
, Sayle, M.%
\BCBL {}\ \BBA {} Newell, B\BPBI R.%
\end{APACrefauthors}%
\unskip\
\newblock
\APACrefYearMonthDay{2014}{}{}.
\newblock
{\BBOQ}\APACrefatitle {On the interpretation of likelihood ratios in forensic
  science evidence: Presentation formats and the weak evidence effect} {On the
  interpretation of likelihood ratios in forensic science evidence:
  Presentation formats and the weak evidence effect}.{\BBCQ}
\newblock
\APACjournalVolNumPages{Forensic Science International}{240}{}{61--68}.
\PrintBackRefs{\CurrentBib}

\bibitem [\protect \citeauthoryear {%
McKenzie%
\ \BBA {} Nelson%
}{%
McKenzie%
\ \BBA {} Nelson%
}{%
{\protect \APACyear {2003}}%
}]{%
mckenzie2003speaker}
\APACinsertmetastar {%
mckenzie2003speaker}%
\begin{APACrefauthors}%
McKenzie, C\BPBI R.%
\BCBT {}\ \BBA {} Nelson, J\BPBI D.%
\end{APACrefauthors}%
\unskip\
\newblock
\APACrefYearMonthDay{2003}{}{}.
\newblock
{\BBOQ}\APACrefatitle {What a speaker’s choice of frame reveals: Reference
  points, frame selection, and framing effects} {What a speaker’s choice of
  frame reveals: Reference points, frame selection, and framing
  effects}.{\BBCQ}
\newblock
\APACjournalVolNumPages{Psychonomic Bulletin \& Review}{10}{3}{596--602}.
\PrintBackRefs{\CurrentBib}

\bibitem [\protect \citeauthoryear {%
{McKenzie}%
, Lee%
\BCBL {}\ \BBA {} Chen%
}{%
{McKenzie}%
\ \protect \BOthers {.}}{%
{\protect \APACyear {2002}}%
}]{%
mckenzie_when_2002}
\APACinsertmetastar {%
mckenzie_when_2002}%
\begin{APACrefauthors}%
{McKenzie}, C\BPBI R\BPBI M.%
, Lee, S\BPBI M.%
\BCBL {}\ \BBA {} Chen, K\BPBI K.%
\end{APACrefauthors}%
\unskip\
\newblock
\APACrefYearMonthDay{2002}{}{}.
\newblock
{\BBOQ}\APACrefatitle {When Negative Evidence Increases Confidence: Change in
  Belief After Hearing Two Sides of a Dispute} {When negative evidence
  increases confidence: Change in belief after hearing two sides of a
  dispute}.{\BBCQ}
\newblock
\APACjournalVolNumPages{Journal of Behavioral Decision Making}{15}{1}{1--18}.
\PrintBackRefs{\CurrentBib}

\bibitem [\protect \citeauthoryear {%
Meibauer%
}{%
Meibauer%
}{%
{\protect \APACyear {2019}}%
}]{%
meibauer2019oxford}
\APACinsertmetastar {%
meibauer2019oxford}%
\begin{APACrefauthors}%
Meibauer, J.%
\end{APACrefauthors}%
\unskip\
\newblock
\APACrefYear{2019}.
\newblock
\APACrefbtitle {The Oxford handbook of lying} {The oxford handbook of lying}.
\newblock
\APACaddressPublisher{}{Oxford University Press}.
\PrintBackRefs{\CurrentBib}

\bibitem [\protect \citeauthoryear {%
Mills%
\ \BBA {} Landrum%
}{%
Mills%
\ \BBA {} Landrum%
}{%
{\protect \APACyear {2016}}%
}]{%
mills2016learning}
\APACinsertmetastar {%
mills2016learning}%
\begin{APACrefauthors}%
Mills, C\BPBI M.%
\BCBT {}\ \BBA {} Landrum, A\BPBI R.%
\end{APACrefauthors}%
\unskip\
\newblock
\APACrefYearMonthDay{2016}{}{}.
\newblock
{\BBOQ}\APACrefatitle {Learning who knows what: Children adjust their inquiry
  to gather information from others} {Learning who knows what: Children adjust
  their inquiry to gather information from others}.{\BBCQ}
\newblock
\APACjournalVolNumPages{Frontiers in Psychology}{7}{}{951}.
\PrintBackRefs{\CurrentBib}

\bibitem [\protect \citeauthoryear {%
Mosconi%
\ \BBA {} Macchi%
}{%
Mosconi%
\ \BBA {} Macchi%
}{%
{\protect \APACyear {2001}}%
}]{%
mosconi2001role}
\APACinsertmetastar {%
mosconi2001role}%
\begin{APACrefauthors}%
Mosconi, G.%
\BCBT {}\ \BBA {} Macchi, L.%
\end{APACrefauthors}%
\unskip\
\newblock
\APACrefYearMonthDay{2001}{}{}.
\newblock
{\BBOQ}\APACrefatitle {The role of pragmatic rules in the conjunction fallacy}
  {The role of pragmatic rules in the conjunction fallacy}.{\BBCQ}
\newblock
\APACjournalVolNumPages{Mind \& Society}{2}{1}{31--57}.
\PrintBackRefs{\CurrentBib}

\bibitem [\protect \citeauthoryear {%
Oey%
, Schachner%
\BCBL {}\ \BBA {} Vul%
}{%
Oey%
\ \protect \BOthers {.}}{%
{\protect \APACyear {2019}}%
}]{%
oey_designing_2019}
\APACinsertmetastar {%
oey_designing_2019}%
\begin{APACrefauthors}%
Oey, L\BPBI A.%
, Schachner, A.%
\BCBL {}\ \BBA {} Vul, E.%
\end{APACrefauthors}%
\unskip\
\newblock
\APACrefYearMonthDay{2019}{}{}.
\newblock
{\BBOQ}\APACrefatitle {Designing good deception: Recursive theory of mind in
  lying and lie detection} {Designing good deception: Recursive theory of mind
  in lying and lie detection}.{\BBCQ}
\newblock
\BIn{} \APACrefbtitle {Proceedings of the 41st {Annual} {Conference} of the
  {Cognitive Science Society}} {Proceedings of the 41st {Annual} {Conference}
  of the {Cognitive Science Society}}\ (\BPGS\ 897--903).
\PrintBackRefs{\CurrentBib}

\bibitem [\protect \citeauthoryear {%
Oey%
\ \BBA {} Vul%
}{%
Oey%
\ \BBA {} Vul%
}{%
{\protect \APACyear {2021}}%
}]{%
oey2021lies}
\APACinsertmetastar {%
oey2021lies}%
\begin{APACrefauthors}%
Oey, L\BPBI A.%
\BCBT {}\ \BBA {} Vul, E.%
\end{APACrefauthors}%
\unskip\
\newblock
\APACrefYearMonthDay{2021}{}{}.
\newblock
{\BBOQ}\APACrefatitle {Lies are crafted to the audience} {Lies are crafted to
  the audience}.{\BBCQ}
\newblock
\BIn{} \APACrefbtitle {Proceedings of the 43rd {A}nnual {M}eeting of the
  {C}ognitive {S}cience {S}ociety} {Proceedings of the 43rd {A}nnual {M}eeting
  of the {C}ognitive {S}cience {S}ociety}\ (\BPG~791 - 797).
\PrintBackRefs{\CurrentBib}

\bibitem [\protect \citeauthoryear {%
O'Keefe%
}{%
O'Keefe%
}{%
{\protect \APACyear {2015}}%
}]{%
o2015persuasion}
\APACinsertmetastar {%
o2015persuasion}%
\begin{APACrefauthors}%
O'Keefe, D\BPBI J.%
\end{APACrefauthors}%
\unskip\
\newblock
\APACrefYear{2015}.
\newblock
\APACrefbtitle {Persuasion: Theory and research} {Persuasion: Theory and
  research}.
\newblock
\APACaddressPublisher{}{Sage Publications}.
\PrintBackRefs{\CurrentBib}

\bibitem [\protect \citeauthoryear {%
Park%
, Levine%
, Westerman%
, Orfgen%
\BCBL {}\ \BBA {} Foregger%
}{%
Park%
\ \protect \BOthers {.}}{%
{\protect \APACyear {2007}}%
}]{%
park2007effects}
\APACinsertmetastar {%
park2007effects}%
\begin{APACrefauthors}%
Park, H\BPBI S.%
, Levine, T\BPBI R.%
, Westerman, C\BPBI Y\BPBI K.%
, Orfgen, T.%
\BCBL {}\ \BBA {} Foregger, S.%
\end{APACrefauthors}%
\unskip\
\newblock
\APACrefYearMonthDay{2007}{}{}.
\newblock
{\BBOQ}\APACrefatitle {The effects of argument quality and involvement type on
  attitude formation and attitude change: A test of dual-process and social
  judgment predictions} {The effects of argument quality and involvement type
  on attitude formation and attitude change: A test of dual-process and social
  judgment predictions}.{\BBCQ}
\newblock
\APACjournalVolNumPages{Human Communication Research}{33}{1}{81--102}.
\PrintBackRefs{\CurrentBib}

\bibitem [\protect \citeauthoryear {%
Perfors%
, Navarro%
\BCBL {}\ \BBA {} Shafto%
}{%
Perfors%
\ \protect \BOthers {.}}{%
{\protect \APACyear {2018}}%
}]{%
perfors_stronger_2018}
\APACinsertmetastar {%
perfors_stronger_2018}%
\begin{APACrefauthors}%
Perfors, A.%
, Navarro, D.%
\BCBL {}\ \BBA {} Shafto, P.%
\end{APACrefauthors}%
\unskip\
\newblock
\APACrefYearMonthDay{2018}{}{}.
\newblock
{\BBOQ}\APACrefatitle {Stronger evidence isn't always better: The role of
  social inference in evidence selection} {Stronger evidence isn't always
  better: The role of social inference in evidence selection}.{\BBCQ}
\newblock
\BIn{} \APACrefbtitle {Proceedings of the 40th {Annual Conference of the
  Cognitive Science Society}} {Proceedings of the 40th {Annual Conference of
  the Cognitive Science Society}}\ (\BPGS\ 864--869).
\PrintBackRefs{\CurrentBib}

\bibitem [\protect \citeauthoryear {%
Petty%
}{%
Petty%
}{%
{\protect \APACyear {2018}}%
}]{%
petty2018attitudes}
\APACinsertmetastar {%
petty2018attitudes}%
\begin{APACrefauthors}%
Petty, R\BPBI E.%
\end{APACrefauthors}%
\unskip\
\newblock
\APACrefYear{2018}.
\newblock
\APACrefbtitle {Attitudes and persuasion: Classic and contemporary approaches}
  {Attitudes and persuasion: Classic and contemporary approaches}.
\newblock
\APACaddressPublisher{}{Routledge}.
\PrintBackRefs{\CurrentBib}

\bibitem [\protect \citeauthoryear {%
Politzer%
\ \BBA {} Macchi%
}{%
Politzer%
\ \BBA {} Macchi%
}{%
{\protect \APACyear {2000}}%
}]{%
politzer2000reasoning}
\APACinsertmetastar {%
politzer2000reasoning}%
\begin{APACrefauthors}%
Politzer, G.%
\BCBT {}\ \BBA {} Macchi, L.%
\end{APACrefauthors}%
\unskip\
\newblock
\APACrefYearMonthDay{2000}{}{}.
\newblock
{\BBOQ}\APACrefatitle {Reasoning and pragmatics} {Reasoning and
  pragmatics}.{\BBCQ}
\newblock
\APACjournalVolNumPages{Mind \& Society}{1}{1}{73--93}.
\PrintBackRefs{\CurrentBib}

\bibitem [\protect \citeauthoryear {%
Poulin-Dubois%
\ \BBA {} Brosseau-Liard%
}{%
Poulin-Dubois%
\ \BBA {} Brosseau-Liard%
}{%
{\protect \APACyear {2016}}%
}]{%
poulin2016developmental}
\APACinsertmetastar {%
poulin2016developmental}%
\begin{APACrefauthors}%
Poulin-Dubois, D.%
\BCBT {}\ \BBA {} Brosseau-Liard, P.%
\end{APACrefauthors}%
\unskip\
\newblock
\APACrefYearMonthDay{2016}{}{}.
\newblock
{\BBOQ}\APACrefatitle {The developmental origins of selective social learning}
  {The developmental origins of selective social learning}.{\BBCQ}
\newblock
\APACjournalVolNumPages{Current Directions in Psychological
  Science}{25}{1}{60--64}.
\PrintBackRefs{\CurrentBib}

\bibitem [\protect \citeauthoryear {%
Ransom%
, Voorspoels%
, Perfors%
\BCBL {}\ \BBA {} Navarro%
}{%
Ransom%
\ \protect \BOthers {.}}{%
{\protect \APACyear {2017}}%
}]{%
ransom2017cognitive}
\APACinsertmetastar {%
ransom2017cognitive}%
\begin{APACrefauthors}%
Ransom, K.%
, Voorspoels, W.%
, Perfors, A.%
\BCBL {}\ \BBA {} Navarro, D.%
\end{APACrefauthors}%
\unskip\
\newblock
\APACrefYearMonthDay{2017}{}{}.
\newblock
{\BBOQ}\APACrefatitle {A cognitive analysis of deception without lying} {A
  cognitive analysis of deception without lying}.{\BBCQ}
\newblock
\BIn{} \APACrefbtitle {Proceedings of the 39th {A}nnual {C}onference of the
  {C}ognitive {S}cience {S}ociety} {Proceedings of the 39th {A}nnual
  {C}onference of the {C}ognitive {S}cience {S}ociety}\ (\BPGS\ 992--997).
\PrintBackRefs{\CurrentBib}

\bibitem [\protect \citeauthoryear {%
Saul%
}{%
Saul%
}{%
{\protect \APACyear {2012}}%
}]{%
saul2012lying}
\APACinsertmetastar {%
saul2012lying}%
\begin{APACrefauthors}%
Saul, J\BPBI M.%
\end{APACrefauthors}%
\unskip\
\newblock
\APACrefYear{2012}.
\newblock
\APACrefbtitle {Lying, misleading, and what is said: An exploration in
  philosophy of language and in ethics} {Lying, misleading, and what is said:
  An exploration in philosophy of language and in ethics}.
\newblock
\APACaddressPublisher{}{Oxford University Press}.
\PrintBackRefs{\CurrentBib}

\bibitem [\protect \citeauthoryear {%
Scontras%
, Tessler%
\BCBL {}\ \BBA {} Franke%
}{%
Scontras%
\ \protect \BOthers {.}}{%
{\protect \APACyear {2018}}%
}]{%
scontras2017probabilistic}
\APACinsertmetastar {%
scontras2017probabilistic}%
\begin{APACrefauthors}%
Scontras, G.%
, Tessler, M\BPBI H.%
\BCBL {}\ \BBA {} Franke, M.%
\end{APACrefauthors}%
\unskip\
\newblock
\APACrefYearMonthDay{2018}{}{}.
\newblock
\APACrefbtitle {Probabilistic language understanding: An introduction to the
  Rational Speech Act framework.} {Probabilistic language understanding: An
  introduction to the rational speech act framework.}
\newblock
\APACaddressPublisher{}{Retrieved from \href{https://problang.org}{http://problang.org}}.
\newblock
\APACrefnote{Accessed: 2020-1-7}
\PrintBackRefs{\CurrentBib}

\bibitem [\protect \citeauthoryear {%
Shafto%
, Goodman%
\BCBL {}\ \BBA {} Griffiths%
}{%
Shafto%
\ \protect \BOthers {.}}{%
{\protect \APACyear {2014}}%
}]{%
shafto_rational_2014}
\APACinsertmetastar {%
shafto_rational_2014}%
\begin{APACrefauthors}%
Shafto, P.%
, Goodman, N\BPBI D.%
\BCBL {}\ \BBA {} Griffiths, T\BPBI L.%
\end{APACrefauthors}%
\unskip\
\newblock
\APACrefYearMonthDay{2014}{}{}.
\newblock
{\BBOQ}\APACrefatitle {A rational account of pedagogical reasoning: Teaching
  by, and learning from, examples} {A rational account of pedagogical
  reasoning: Teaching by, and learning from, examples}.{\BBCQ}
\newblock
\APACjournalVolNumPages{Cognitive Psychology}{71}{}{55--89}.
\PrintBackRefs{\CurrentBib}

\bibitem [\protect \citeauthoryear {%
Sikos%
, Venhuizen%
, Drenhaus%
\BCBL {}\ \BBA {} Crocker%
}{%
Sikos%
\ \protect \BOthers {.}}{%
{\protect \APACyear {2021}}%
}]{%
sikos2021speak}
\APACinsertmetastar {%
sikos2021speak}%
\begin{APACrefauthors}%
Sikos, L.%
, Venhuizen, N\BPBI J.%
, Drenhaus, H.%
\BCBL {}\ \BBA {} Crocker, M\BPBI W.%
\end{APACrefauthors}%
\unskip\
\newblock
\APACrefYearMonthDay{2021}{}{}.
\newblock
{\BBOQ}\APACrefatitle {Speak before you listen: Pragmatic reasoning in
  multi-trial language games} {Speak before you listen: Pragmatic reasoning in
  multi-trial language games}.{\BBCQ}
\newblock
\BIn{} \APACrefbtitle {Proceedings of the 43rd {A}nnual {M}eeting of the
  {C}ognitive {S}cience {S}ociety.} {Proceedings of the 43rd {A}nnual {M}eeting
  of the {C}ognitive {S}cience {S}ociety.}
\PrintBackRefs{\CurrentBib}

\bibitem [\protect \citeauthoryear {%
Sobel%
\ \BBA {} Kushnir%
}{%
Sobel%
\ \BBA {} Kushnir%
}{%
{\protect \APACyear {2013}}%
}]{%
sobel2013knowledge}
\APACinsertmetastar {%
sobel2013knowledge}%
\begin{APACrefauthors}%
Sobel, D\BPBI M.%
\BCBT {}\ \BBA {} Kushnir, T.%
\end{APACrefauthors}%
\unskip\
\newblock
\APACrefYearMonthDay{2013}{}{}.
\newblock
{\BBOQ}\APACrefatitle {Knowledge matters: How children evaluate the reliability
  of testimony as a process of rational inference.} {Knowledge matters: How
  children evaluate the reliability of testimony as a process of rational
  inference.}{\BBCQ}
\newblock
\APACjournalVolNumPages{Psychological Review}{120}{4}{779--797}.
\PrintBackRefs{\CurrentBib}

\bibitem [\protect \citeauthoryear {%
Sperber%
, Cara%
\BCBL {}\ \BBA {} Girotto%
}{%
Sperber%
\ \protect \BOthers {.}}{%
{\protect \APACyear {1995}}%
}]{%
sperber1995relevance}
\APACinsertmetastar {%
sperber1995relevance}%
\begin{APACrefauthors}%
Sperber, D.%
, Cara, F.%
\BCBL {}\ \BBA {} Girotto, V.%
\end{APACrefauthors}%
\unskip\
\newblock
\APACrefYearMonthDay{1995}{}{}.
\newblock
{\BBOQ}\APACrefatitle {Relevance theory explains the selection task} {Relevance
  theory explains the selection task}.{\BBCQ}
\newblock
\APACjournalVolNumPages{Cognition}{57}{1}{31--95}.
\PrintBackRefs{\CurrentBib}

\bibitem [\protect \citeauthoryear {%
Tenenbaum%
}{%
Tenenbaum%
}{%
{\protect \APACyear {1999}}%
}]{%
tenenbaum1999bayesian}
\APACinsertmetastar {%
tenenbaum1999bayesian}%
\begin{APACrefauthors}%
Tenenbaum, J\BPBI B.%
\end{APACrefauthors}%
\unskip\
\newblock
\APACrefYearMonthDay{1999}{}{}.
\newblock
{\BBOQ}\APACrefatitle {Bayesian modeling of human concept learning} {Bayesian
  modeling of human concept learning}.{\BBCQ}
\newblock
\BIn{} \APACrefbtitle {Advances in {Neural Information Processing Systems}}
  {Advances in {Neural Information Processing Systems}}\ (\BPGS\ 59--68).
\PrintBackRefs{\CurrentBib}

\bibitem [\protect \citeauthoryear {%
Tenenbaum%
\ \BBA {} Griffiths%
}{%
Tenenbaum%
\ \BBA {} Griffiths%
}{%
{\protect \APACyear {2001}}%
}]{%
tenenbaum2001generalization}
\APACinsertmetastar {%
tenenbaum2001generalization}%
\begin{APACrefauthors}%
Tenenbaum, J\BPBI B.%
\BCBT {}\ \BBA {} Griffiths, T\BPBI L.%
\end{APACrefauthors}%
\unskip\
\newblock
\APACrefYearMonthDay{2001}{}{}.
\newblock
{\BBOQ}\APACrefatitle {Generalization, similarity, and {B}ayesian inference}
  {Generalization, similarity, and {B}ayesian inference}.{\BBCQ}
\newblock
\APACjournalVolNumPages{Behavioral and Brain Sciences}{24}{4}{629--640}.
\PrintBackRefs{\CurrentBib}

\bibitem [\protect \citeauthoryear {%
Tomasello%
}{%
Tomasello%
}{%
{\protect \APACyear {2009}}%
}]{%
tomasello2009cultural}
\APACinsertmetastar {%
tomasello2009cultural}%
\begin{APACrefauthors}%
Tomasello, M.%
\end{APACrefauthors}%
\unskip\
\newblock
\APACrefYear{2009}.
\newblock
\APACrefbtitle {The cultural origins of human cognition} {The cultural origins
  of human cognition}.
\newblock
\APACaddressPublisher{Cambridge, MA}{Harvard {University} {Press}}.
\PrintBackRefs{\CurrentBib}

\bibitem [\protect \citeauthoryear {%
Trueblood%
\ \BBA {} Busemeyer%
}{%
Trueblood%
\ \BBA {} Busemeyer%
}{%
{\protect \APACyear {2011}}%
}]{%
trueblood2011quantum}
\APACinsertmetastar {%
trueblood2011quantum}%
\begin{APACrefauthors}%
Trueblood, J\BPBI S.%
\BCBT {}\ \BBA {} Busemeyer, J\BPBI R.%
\end{APACrefauthors}%
\unskip\
\newblock
\APACrefYearMonthDay{2011}{}{}.
\newblock
{\BBOQ}\APACrefatitle {A quantum probability account of order effects in
  inference} {A quantum probability account of order effects in
  inference}.{\BBCQ}
\newblock
\APACjournalVolNumPages{Cognitive Science}{35}{8}{1518--1552}.
\PrintBackRefs{\CurrentBib}

\bibitem [\protect \citeauthoryear {%
Vehtari%
, Gelman%
\BCBL {}\ \BBA {} Gabry%
}{%
Vehtari%
\ \protect \BOthers {.}}{%
{\protect \APACyear {2017}}%
}]{%
vehtari2017practical}
\APACinsertmetastar {%
vehtari2017practical}%
\begin{APACrefauthors}%
Vehtari, A.%
, Gelman, A.%
\BCBL {}\ \BBA {} Gabry, J.%
\end{APACrefauthors}%
\unskip\
\newblock
\APACrefYearMonthDay{2017}{}{}.
\newblock
{\BBOQ}\APACrefatitle {Practical {B}ayesian model evaluation using
  leave-one-out cross-validation and {WAIC}} {Practical {B}ayesian model
  evaluation using leave-one-out cross-validation and {WAIC}}.{\BBCQ}
\newblock
\APACjournalVolNumPages{Statistics and computing}{27}{5}{1413--1432}.
\PrintBackRefs{\CurrentBib}

\bibitem [\protect \citeauthoryear {%
V{\'e}lez%
\ \BBA {} Gweon%
}{%
V{\'e}lez%
\ \BBA {} Gweon%
}{%
{\protect \APACyear {2019}}%
}]{%
velez2019integrating}
\APACinsertmetastar {%
velez2019integrating}%
\begin{APACrefauthors}%
V{\'e}lez, N.%
\BCBT {}\ \BBA {} Gweon, H.%
\end{APACrefauthors}%
\unskip\
\newblock
\APACrefYearMonthDay{2019}{}{}.
\newblock
{\BBOQ}\APACrefatitle {Integrating incomplete information with imperfect
  advice} {Integrating incomplete information with imperfect advice}.{\BBCQ}
\newblock
\APACjournalVolNumPages{Topics in Cognitive Science}{11}{2}{299--315}.
\PrintBackRefs{\CurrentBib}

\bibitem [\protect \citeauthoryear {%
Vignero%
}{%
Vignero%
}{%
{\protect \APACyear {2022}}%
}]{%
vignero2022updating}
\APACinsertmetastar {%
vignero2022updating}%
\begin{APACrefauthors}%
Vignero, L.%
\end{APACrefauthors}%
\unskip\
\newblock
\APACrefYearMonthDay{2022}{}{}.
\newblock
{\BBOQ}\APACrefatitle {Updating on Biased Probabilistic Testimony} {Updating on
  biased probabilistic testimony}.{\BBCQ}
\newblock
\APACjournalVolNumPages{Erkenntnis}{}{}{1--24}.
\PrintBackRefs{\CurrentBib}

\bibitem [\protect \citeauthoryear {%
Watanabe%
}{%
Watanabe%
}{%
{\protect \APACyear {2013}}%
}]{%
watanabe2013widely}
\APACinsertmetastar {%
watanabe2013widely}%
\begin{APACrefauthors}%
Watanabe, S.%
\end{APACrefauthors}%
\unskip\
\newblock
\APACrefYearMonthDay{2013}{}{}.
\newblock
{\BBOQ}\APACrefatitle {A widely applicable {B}ayesian information criterion} {A
  widely applicable {B}ayesian information criterion}.{\BBCQ}
\newblock
\APACjournalVolNumPages{Journal of Machine Learning Research}{14}{}{867--897}.
\PrintBackRefs{\CurrentBib}

\bibitem [\protect \citeauthoryear {%
Whalen%
, Griffiths%
\BCBL {}\ \BBA {} Buchsbaum%
}{%
Whalen%
\ \protect \BOthers {.}}{%
{\protect \APACyear {2017}}%
}]{%
WhalenEtAl18_SensitivityToSharedInfo}
\APACinsertmetastar {%
WhalenEtAl18_SensitivityToSharedInfo}%
\begin{APACrefauthors}%
Whalen, A.%
, Griffiths, T\BPBI L.%
\BCBL {}\ \BBA {} Buchsbaum, D.%
\end{APACrefauthors}%
\unskip\
\newblock
\APACrefYearMonthDay{2017}{}{}.
\newblock
{\BBOQ}\APACrefatitle {Sensitivity to shared information in social learning}
  {Sensitivity to shared information in social learning}.{\BBCQ}
\newblock
\APACjournalVolNumPages{Cognitive Science}{42}{1}{168--187}.
\PrintBackRefs{\CurrentBib}

\bibitem [\protect \citeauthoryear {%
Wood%
, Kendal%
\BCBL {}\ \BBA {} Flynn%
}{%
Wood%
\ \protect \BOthers {.}}{%
{\protect \APACyear {2013}}%
}]{%
wood2013whom}
\APACinsertmetastar {%
wood2013whom}%
\begin{APACrefauthors}%
Wood, L\BPBI A.%
, Kendal, R\BPBI L.%
\BCBL {}\ \BBA {} Flynn, E\BPBI G.%
\end{APACrefauthors}%
\unskip\
\newblock
\APACrefYearMonthDay{2013}{}{}.
\newblock
{\BBOQ}\APACrefatitle {Whom do children copy? {M}odel-based biases in social
  learning} {Whom do children copy? {M}odel-based biases in social
  learning}.{\BBCQ}
\newblock
\APACjournalVolNumPages{Developmental Review}{33}{4}{341--356}.
\PrintBackRefs{\CurrentBib}

\bibitem [\protect \citeauthoryear {%
Yoon%
, MacDonald%
, Asaba%
, Gweon%
\BCBL {}\ \BBA {} Frank%
}{%
Yoon%
\ \protect \BOthers {.}}{%
{\protect \APACyear {2018}}%
}]{%
yoon2018balancing}
\APACinsertmetastar {%
yoon2018balancing}%
\begin{APACrefauthors}%
Yoon, E\BPBI J.%
, MacDonald, K.%
, Asaba, M.%
, Gweon, H.%
\BCBL {}\ \BBA {} Frank, M\BPBI C.%
\end{APACrefauthors}%
\unskip\
\newblock
\APACrefYearMonthDay{2018}{}{}.
\newblock
{\BBOQ}\APACrefatitle {Balancing informational and social goals in active
  learning.} {Balancing informational and social goals in active
  learning.}{\BBCQ}
\newblock
\BIn{} \APACrefbtitle {Proceedings of the 40th {A}nnual {C}onference of the
  {C}ognitive {S}cience {S}ociety} {Proceedings of the 40th {A}nnual
  {C}onference of the {C}ognitive {S}cience {S}ociety}\ (\BPGS\ 1218--1223).
\PrintBackRefs{\CurrentBib}

\bibitem [\protect \citeauthoryear {%
Yoon%
, Tessler%
, Goodman%
\BCBL {}\ \BBA {} Frank%
}{%
Yoon%
\ \protect \BOthers {.}}{%
{\protect \APACyear {2020}}%
}]{%
yoon2020polite}
\APACinsertmetastar {%
yoon2020polite}%
\begin{APACrefauthors}%
Yoon, E\BPBI J.%
, Tessler, M\BPBI H.%
, Goodman, N\BPBI D.%
\BCBL {}\ \BBA {} Frank, M\BPBI C.%
\end{APACrefauthors}%
\unskip\
\newblock
\APACrefYearMonthDay{2020}{}{}.
\newblock
{\BBOQ}\APACrefatitle {Polite speech emerges from competing social goals}
  {Polite speech emerges from competing social goals}.{\BBCQ}
\newblock
\APACjournalVolNumPages{Open Mind}{4}{}{71--87}.
\PrintBackRefs{\CurrentBib}

\end{thebibliography}

\renewcommand{\thefigure}{S\arabic{figure}}
\renewcommand{\thetable}{S\arabic{table}}
\setcounter{table}{0}
\setcounter{figure}{0}

\section*{Appendix A: Exclusions and attention checks}

Our pre-registered exclusion criteria used two basic attention checks. 
First, participants were required to complete a comprehension quiz immediately following the task instructions, and we excluded participants who failed to successfully complete this quiz within three attempts.
Second, at the end of the experiment, we asked participants to use a slider to indicate the degree of bias they believed each contestant exhibited. 
These motivations were stated explicitly in the instructions (e.g. ``the red contestant will receive \$10 if the judge chooses ``shorter,'' otherwise the blue contestant will receive \$10'') so, although participants may differ in the \emph{degree} to which they thought such incentives would bias the contestants away from neutrality, we took responses in the \emph{opposite} direction of the incentive as indicative of inattentiveness or misunderstanding of task instructions. 

We therefore coded bias check responses as ``incorrect'' if the slider response was inconsistent with the bias given in the instructions (e.g. if the short-biased contestant received a slider rating above the midpoint, $s \ge 50-\epsilon$, or the long-biased contestant received a slider rating below the midpoint, $s \le 50+\epsilon$ where we set $\epsilon = 5$ to allow for the possibility of motor jitter from participants who intended to use the exact midpoint.)
In our pre-registered second sample (reported in the main text), $793$ participants completed instructions and $723$ (91\%) of them passed the attention check.

While these pre-registered criteria were designed to ensure that apparent differences in speaker and listener behavior were not simply driven by general attentional factors, it is possible that participants who did not expect the strongest evidence to be shown in the speaker phase (238 participants, or 33\%) were still systematically less attentive than other participants. 
To address this concern, we analyzed a series of other measures to assess the degree of attention and task understanding across ``speaker expectation'' groups.
Specifically, we examine internal consistency within several post-test questions, where we asked participants (i) to make a final two-alternative forced choice verdict about whether the sample of sticks is `longer' vs. `shorter' than 5 inches, (ii) to provide a point estimate of their best guess of the actual mean on a slider ranging from 1 inch to 9 inches, and (iii) to guess the values of the remaining three sticks that were not revealed, allowing us to impute a ``generative'' average across the two observed values and the three guessed values (Table \ref{tab:attentionchecks}).

\begin{table}[t]
\centering
\begin{tabular}{ll|p{.2\linewidth}p{.165\linewidth}}
  \multirow{2}{*}{group} & \multirow{2}{*}{n} & both 2AFC and point estimate consistent & generative model also consistent  \\ 
  \hline
 strongest first & 485 & 0.97 & 0.89 \\
 \emph{not} strongest first & 238 & 0.96 & 0.86\\
\end{tabular}
    \captionsetup{font={small,stretch=.75}, strut=on}
\caption{Stricter attention check passage rates broken out by speaker group.}
\label{tab:attentionchecks}
\end{table}
We say a participant passed the 2AFC check if their binary verdict (`longer' vs. `shorter') is consistent with the direction of their point estimate.
We say a participant also passed the stricter ``generative'' check if the average imputed from their guesses for the remaining three unobserved sticks matches their 2AFC and point estimates.
We observe that rates for the these stricter checks were somewhat lower for participants who expected speakers not to show the strongest evidence first (97\% vs. 96\%, and 89\% vs. 86\%, respectively), though neither of these differences was significant, $\chi^2(1)=0.76, p=0.38$ and $\chi^2(1)=1.05, p=0.31$, respectively. 
Rates were far above chance for all groups.
To ensure robustness, we re-ran our primary analyses on the subset of participants that passed the strictest conjunction of all checks, which is highly improbable under an inattentive null model, and obtained nearly identical results (most crucially, a significant interaction, $t(718)=5.18, p<0.001$).

\begin{figure}
    \centering
        \captionsetup{font={small,stretch=.75}, strut=on}

    \includegraphics[scale=0.8]{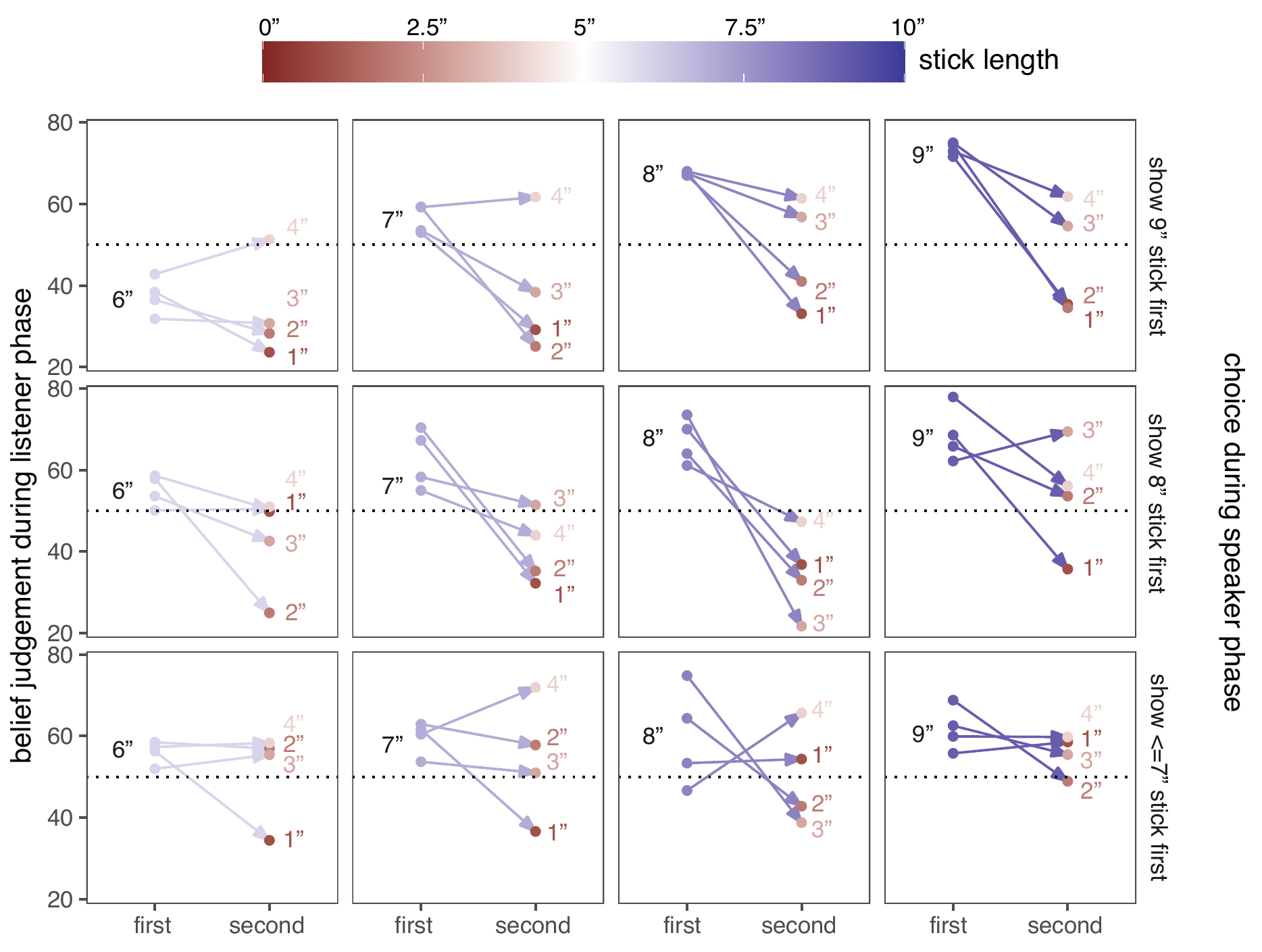}
    \caption{Participants revised their beliefs after obtaining a second piece of evidence. Each facet represents participants who were given the same initial piece of evidence (blue dots) with each arrow connecting their judgment after the first piece of evidence and the second piece of evidence. In most cases, participants revised their estimates down, although participants who showed a weak evidence effect for the first stick (top column) also displayed a classical weak evidence effect on the second piece of evidence (e.g. in the second row, participants who saw a 7" stick on the first trial were slightly \emph{more} confident the average was longer after seeing a 4" stick).}
    \label{fig:order_effect_detailed}
\end{figure}

\begin{figure}
    \captionsetup{font={small,stretch=.75}, strut=on}
    \centering
    \includegraphics[scale=0.8]{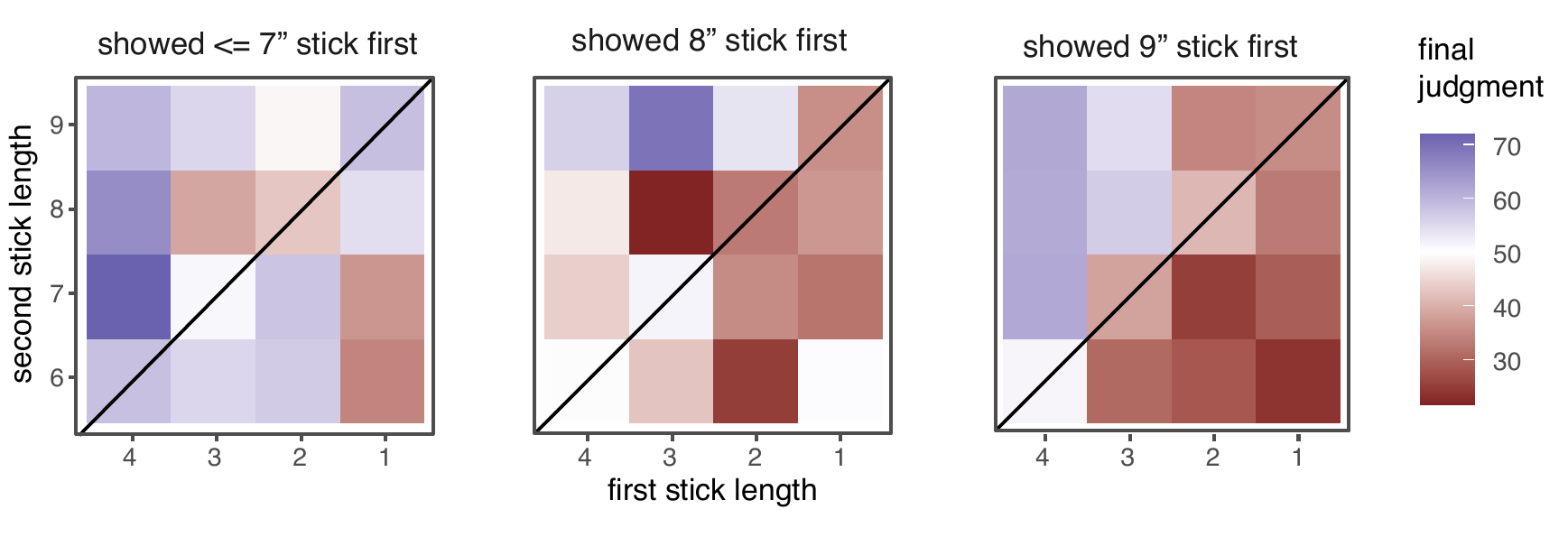}
    \caption{We found strong order effects, with the belief judgment elicited after the second stick apparently affected by a recency bias. Under perfect averaging, the diagonal would leave the judge with complete uncertainty (denoted on our color scale by white), since the evidence from both the longer side (blue) and the shorter side (red) should cancel out.}
    \label{fig:order_grid}
\end{figure}

\newpage
\section*{Appendix B: Order effects}

While we focus on the first piece of evidence as the clearest weak evidence effect, we also collected a second response after a second piece of evidence was shown by the other speaker. 
These responses are visualized in Fig.~\ref{fig:order_effect_detailed}.
As expected, we observed a recency effect (more easily observed in the diagonal of Fig.~\ref{fig:order_grid}, where evidence from the ``short''-biased and ``long''-biased speakers were equally strong), where participants weighted the second piece of evidence more strongly.

\section*{Appendix C: Proofs}

\begin{theorem}
The speaker model using the combined utility Eq.~6 simplifies to Eq.~8 for the stick contest task.
\end{theorem}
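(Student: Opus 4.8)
The plan is to show that, in the Stick Contest, the epistemic term $U_{\textrm{epi}}(u;w)$ in the combined utility (Eq.~\ref{eq:combU}) takes the same value for every utterance the speaker is permitted to make, so that it factors out of the softmax in Eq.~\ref{eq:s1} as a multiplicative normalizing constant, leaving exactly the persuasive term of Eq.~\ref{eq:simplified}. First I would fix the formalization: the world $w$ is the tuple of five true stick lengths, an utterance $u$ reveals the length of one designated stick, and the admissible utterances are precisely the \emph{true} ones, since the speaker can only point at a stick that is actually present.

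Next I would compute the epistemic utility $U_{\textrm{epi}}(u;w)=\ln P_{L_0}(w\mid u)$ using the literal listener of Eq.~\ref{eq:L0}. For every admissible $u$ we have $\delta_{\llbracket u\rrbracket(w)}=1$, so $P_{L_0}(w\mid u)=P(w)/Z(u)$, where $Z(u)=\sum_{w'}\delta_{\llbracket u\rrbracket(w')}P(w')$ is the prior mass on worlds consistent with $u$. The crux is that, under the task's exchangeable (i.i.d.) prior over the five stick lengths, revealing any single stick pins down exactly one coordinate and leaves the remaining four free, so $Z(u)$ --- and hence $P_{L_0}(w\mid u)$ --- is identical for all admissible $u$. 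I would establish this by a short counting argument: each true reveal is consistent with the same number of worlds (those agreeing on the revealed coordinate), so $U_{\textrm{epi}}(u;w)=\ln c$ for a constant $c$ independent of $u$.

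With the epistemic term constant, I would substitute into Eq.~\ref{eq:s1}: $\exp\{\alpha(U_{\textrm{epi}}+\beta U_{\textrm{pers}})\}=e^{\alpha\ln c}\exp\{\alpha\beta\,U_{\textrm{pers}}(u;w^*)\}$. Because $e^{\alpha\ln c}$ does not depend on $u$, it is absorbed into the proportionality constant, and substituting $U_{\textrm{pers}}(u;w^*)=\ln P_{L_0}(w^*\mid u)$ from Eq.~\ref{eq:persU} yields precisely Eq.~\ref{eq:simplified}. The limiting cases noted in the text ($\beta=0$ giving a uniform speaker and $\beta\to\infty$ giving the argmax) then follow immediately, and the monotonicity claim reduces to showing that $P_{L_0}(\texttt{longer}\mid u)$ is increasing in the revealed length.

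I expect the main obstacle to be the key step: rigorously pinning down the utterance semantics and the prior so that $Z(u)$ is genuinely constant. The claim is sensitive to these modeling choices --- for instance, an existential reading (``some stick has length $\ell$'') would make $Z(u)$ depend on $\ell$ through the number of consistent worlds, breaking exact constancy --- so the real content of the argument is justifying the position-identified reveal and exchangeable prior under which the epistemic contributions are exactly equal, rather than the subsequent algebra.
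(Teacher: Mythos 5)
Your proposal is correct and takes essentially the same route as the paper's proof: establish that $U_{\textrm{epi}}(u;w)$ is identical for every admissible (true) utterance, so the epistemic factor is absorbed into the softmax's proportionality constant, leaving only the persuasive term and hence Eq.~8. One minor side remark is off --- under the task's uniform i.i.d.\ prior, even an existential reading (``some stick has length $\ell$'') yields a $Z(u)$ that is constant in $\ell$ by symmetry over length labels, so the crux is the exchangeable uniform prior (which you correctly emphasize) rather than the position-identified semantics --- but this does not affect the validity of your argument.
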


\begin{proof}
We begin by substituting the combined utility (Eq.~6) into the speaker softmax:
\begin{align}
S(u | w, w^*) & \propto \exp\{\alpha \cdot U(u; w, w^*)\}\nonumber \\
 & = \exp\{\alpha \cdot [U_{\textrm{epi}}(u; w) + \beta \cdot U_{\textrm{pers}}(u; w^*)]\} \nonumber \\
& =  \exp\{\alpha \cdot U_{\textrm{epi}}(u; w)\} \cdot  \exp\{\alpha \cdot  \beta \cdot U_{\textrm{pers}}(u; w^*)\} \nonumber
\end{align}
Now, using Eq.~3 to expand the first term, note that 
$$U_{\textrm{epi}}(u; w) = \ln P_{L_0}(w | u) = \ln  \frac{P(w)\delta_{\llbracket u \rrbracket (w)}}{\sum_w  P(w) \delta_{\llbracket u \rrbracket (w)}} =\left\{\begin{array}{ll}-\ln  N & \textrm{if } \llbracket u \rrbracket (w) \\ -\infty & \textrm{o.w.}\end{array}\right.$$ 
where $N$ is the number of sticks in the true set ($N=5$ in our experiment).
However, we already assume that the set of possible utterances $\mathcal{U}$ are the true sticks in the underlying set (i.e. the contestants cannot make up sticks, they must choose one of the $N$ sticks in the set), so
\begin{align}
\exp\{\alpha \cdot U_{\textrm{epi}}(u,w)\} & =\left\{\begin{array}{ll}\alpha / N & \textrm{if } \llbracket u \rrbracket (w) \\ 0 & \textrm{o.w.}\end{array}\right. \nonumber\\
& = \alpha / N \nonumber
\end{align}
Because all utterances have the exact same epistemic utility $U_{epi}$, this term drops out of the soft-max:
\begin{align}
\nonumber
S(u | w, w^*) & \propto \exp\{\alpha \cdot U_{\textrm{epi}}(u; w)\} \cdot  \exp\{\alpha \cdot  \beta \cdot U_{\textrm{pers}}(u; w^*)\} \nonumber \\
& \propto  \exp\{\alpha \cdot  \beta \cdot U_{\textrm{pers}}(u; w^*)\} \nonumber\\
& = \exp\{\alpha \cdot  \beta \cdot \ln L_0(w^* | u)\}\nonumber
\end{align}
yielding Eq.~8.
\end{proof}

\begin{theorem}
Persuasiveness monotonically increases as a function of stick length.
\end{theorem}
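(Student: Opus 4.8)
The plan is to reduce the claim to a monotonicity statement about a single tail probability of a fixed random sum. First I would unpack the definitions from the main text: the persuasiveness of an utterance $u$ toward the goal $w^* = \texttt{longer}$ is $U_{\textrm{pers}}(u;\texttt{longer}) = \ln P_{L_0}(\texttt{longer}\mid u)$, and since $\ln$ is strictly increasing it suffices to show that the literal-listener posterior $P_{L_0}(\texttt{longer}\mid u)$ is strictly increasing in the revealed stick length $u$. So the entire problem collapses to understanding how a single literal observation moves the listener's belief in the proposition ``the sample average exceeds the midpoint.''

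Second, I would compute $P_{L_0}(\texttt{longer}\mid u)$ directly from the literal-listener definition (Eq.~4). Observing a stick of length $u$ pins one of the five sticks to the value $u$; the proposition $\texttt{longer}$ is the event that the sample mean exceeds the midpoint $5$, i.e. that $u + \sum_{i=2}^{5} X_i > 25$, where $X_2,\dots,X_5$ denote the four unobserved sticks drawn from the prior. Writing $S = \sum_{i=2}^{5} X_i$, this gives $P_{L_0}(\texttt{longer}\mid u) = P\!\left(S > 25 - u\right) = 1 - F_S(25 - u)$, where $F_S$ is the (fixed) cumulative distribution function of $S$. The point I would stress is that the law of $S$ does \emph{not} depend on $u$: because the prior over the five-stick set is exchangeable (the sticks are independent under the prior), conditioning on one stick equalling $u$ leaves the joint distribution of the remaining four unchanged.

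Third, I would conclude the monotonicity. The threshold map $u \mapsto 25 - u$ is strictly decreasing, so $u \mapsto 1 - F_S(25 - u)$ is non-decreasing in $u$, and strictly increasing at every $u$ for which $25 - u$ lies in the interior of the support of $S$, i.e. wherever $0 < P_{L_0}(\texttt{longer}\mid u) < 1$. Transferring this through the strictly increasing $\ln$ yields strict monotonicity of $U_{\textrm{pers}}(\,\cdot\,;\texttt{longer})$ in stick length, which is the claim. The case $w^* = \texttt{shorter}$ follows by the symmetric argument with the inequality (and hence the direction of monotonicity) reversed.

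The main obstacle is the step that the conditional law of the unobserved sticks is independent of the observed value $u$; this is precisely what makes $F_S$ a \emph{fixed} distribution and thus drives strict monotonicity. I would therefore make the exchangeability/independence assumption on the prior explicit and verify that conditioning on ``one stick equals $u$'' does not perturb the conditional distribution of the other four — if the prior were non-exchangeable or the sticks correlated, the tail probability need not be monotone, so this is the genuine pivot of the argument rather than a formality. A secondary and more routine subtlety is strictness versus mere weak monotonicity at the extremes, where the tail probability saturates at $0$ or $1$; I would simply note that over the effective range of admissible stick lengths the posterior stays strictly inside $(0,1)$, delivering strict monotonicity there.
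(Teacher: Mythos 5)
Your proposal is correct and follows essentially the same route as the paper's proof: both reduce the claim (via monotonicity of $\ln$) to showing the literal listener's posterior is a tail probability $F$ of the sum of the unobserved sticks evaluated at a threshold $5N - u$ that moves monotonically with $u$, with the key observation that the law of that sum does not depend on which stick was revealed because the prior is i.i.d. The only difference is cosmetic --- you argue the \texttt{longer} case with the complementary CDF while the paper argues \texttt{shorter} with the CDF directly --- plus your (welcome) extra care about strict versus weak monotonicity, which the paper leaves implicit.
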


\begin{proof}
We say an utterance $u$ is more persuasive than an utterance $u'$ when $$U_{\textrm{pers}}(u \mid w^*) > U_{\textrm{pers}}(u' \mid  w^*).$$
Under the stick contest, let $\mathcal{L} = \{l_1, \dots, l_N\}$ be an partially-ordered set of $N$ stick lengths, such that $l_i\le l_j$ for any index $i < j$.
We denote the mean stick length by $\bar{l} = \frac{1}{N}\sum_i l_i$.
Without loss of generality, let the speaker's persuasive goal be $w^* = $\texttt{ shorter} $=\bar{l}<5$ (the argument follows analogously for \texttt{longer}). 
Take two utterances $u=l_i$ and $u'=l_j$ such that $l_i \le l_j$ (i.e. such that $u$ is just as short or shorter than $u'$).
First, we expand the utility:
\begin{align}
U_{\textrm{pers}}(u \mid \texttt{shorter}) & = \ln L_0(\texttt{shorter} \mid u)\nonumber \\
  & = \ln P(\bar{l} < 5 \mid l_i)\nonumber \\
  & = \ln P\left(\frac{l_i + \sum l_{-i}}{N} < 5\right)\nonumber\\
  & = \ln P\left(\sum l_{-i} < 5N - l_i\right)\nonumber
 \end{align}
Now, let $X$ be a random variable representing the sum of the $N-1$ still-unknown sticks, $X=\sum l_{-i}$.
Then we recognize this as the cumulative distribution function (CDF), $F_X(x) = P(X < x)$. 
Because the underlying set of sticks $\mathcal{L}$ is assumed to be i.i.d., note that the random variable $X=\sum l_{-i}$ does not depend on the original choice of $i$.
Critically, we know that the cumulative distribution function is monotonic increasing in $x$, i.e. $F_X(a) \le F_X(b)$ for $a\le b$.
Hence if $l_i \le l_j$ then $5N - l_i \ge 5N - l_j$ and $F_X(5N - l_i) \ge F_X(5N - l_j)$:
\begin{align}
U(u \mid \texttt{shorter}) & =  \ln P\left(\sum l_{-i} < 5N - l_i\right)\nonumber \\
& = \ln F_X(5N-l_i)\nonumber \\
& \ge \ln F_X(5N-l_j)\nonumber \\
& = U(u' \mid \texttt{shorter})\nonumber
\end{align}
\end{proof}

\section*{Appendix D: Results from original sample}

The results reported in the main text are based on a pre-registered replication we conducted during the revision of the manuscript (May 2022). 
In this appendix, we report the corresponding results from our original sample (February 2020).
The only methodological difference between the original study and the internal replication was the way we counter-balanced the order of the ``long''- vs. ``short''-biased contestants. 
In our original study, the ``long''-biased contestant always presented their evidence first; in our replication, the order of the contestants was randomized. 
Additionally, in our replication, we added the following clarification to the instructions: ``Sticks ranging in length from 1 to 9 inches are equally likely to appear in the set.''
Participants in the initial sample were recruited on the Prolific platform, with no restriction on country. 
Of the $784$ participants who successfully completed the instructions, $708$ passed the second attention check.

Our regression model was the same as in the main text, except we did not include a fixed effect of ``long'' vs. ``short'': all participants were shown evidence from the ``long''-biased speaker.
As in the study reported in the main text, we found a significant interaction between speaker expectations and evidence strength on beliefs about the underlying mean, $t(704) = 5.9, p < 0.001$. 
For participants who expected the speaker to provide the strongest evidence (421 participants or 60\% of our sample), the weak evidence provided by a six inch stick backfired, leading them to instead expect that the mean stick length was significantly less likely to be longer than five inches, $m = 37.5$, 95\% CI: $[33.1, 41.9]$, $t(98) = −5.7$, $p < 0.001$. 
Meanwhile, for participants who expected to be shown the second-longest stick (40\% of the sample), no weak evidence effect was found, with the `longest stick' group significantly different from the other groups, $t(167) = −5.5, p < 0.001$.

\section*{Appendix E: Model fitting details}


\subsection{RSA model}

We used the following priors for our Bayesian data analysis:
\begin{align*}
    y & \sim \textrm{Gaussian}(\mu + o, 0.3)  \\
    p_{z} & \sim \textrm{Unif}[0,1]   \\
    \beta & \sim  \textrm{Unif}[0,10] \\
    o & \sim \textrm{Unif}[-0.5,0.5]
\end{align*}
where $p_{z}$ is the mixture weight used for heterogeneous models, $\mu = P_{J_i}(\texttt{longer} | u) \in [0,1]$ is the RSA listener model's posterior belief, and $o$ is a uniform offset included to allow for systematic response biases in use of the slider.
Intuitively, $\textrm{Gaussian}(\mu+o, 0.3)$ can be viewed as a simple way of scoring the error between the model prediction $\mu+o$ and the participant's response $y$.
For the speaker-dependent model, we used independent priors depending on the participant's choice of stick $j$:  $p_{z}^{(j)} \sim \textrm{Unif}[0,1]$.
Because there were relatively fewer participants who expected the \texttt{longer} speaker to choose 0.2 or 0.4 (sticks that were in the opposite direction of their goal; and vice versa for the \texttt{shorter} speaker), we collapsed these participants together, forming three groups: those who expected the strongest evidence to be presented first (e.g. who selected $\{0.2, 0.9\}$ for the \emph{short} and \emph{long} biased speakers, respectively), those who expected the second-strongest to be presented first (e.g. who selected $\{0.4, 0.8\}$, respectively), and those who expected less strong evidence.
However, our findings are robust to whether we collapse these groups or not. 

\subsection{Belief-adjustment models}

In the notation of \citet{mckenzie_when_2002}, Eq.~9 is written:
\begin{equation}
    C_k = C_{k-1} + w_k \cdot (s(e_k) - R),\label{eq:aa-singleobs}
\end{equation}
where $C_k \in [0, 1]$ is the degree of belief in a particular claim after being presented with evidence $e_k$, $s(e_k)$ is the \emph{independently judged} strength of evidence $e_k$, $R$ is a reference point, and $w_k\in [0, 1]$ is an adjustment weight for evidence $e_k$. 
In the \emph{adding} variant of the belief-adjustment model, \citet{hogarth1992order} argue that the evidence should be encoded in an absolute manner, letting ${R=0}$ and $s(e_k)\in [-1,1]$, and  assuming that if ${s(e_k)\le R}$ then ${w_k=C_{k-1}}$, otherwise ${w_k=1-C_{k-1}}$.\footnote{
The \emph{averaging} variant, in which evidence is encoded in relationship to the current belief in the hypothesis, is more suited for \emph{estimation} tasks involving some kind of moving average \citep{hogarth1992order}, whereas the Stick Contest is better described as an \emph{evaluation} task in which a single hypothesis is under consideration (``is the sample long?''). We also found empirically that the adding variant provided a better fit to the data than the averaging variant.}
To allow the reference point for evidence to be more demanding than neutrality, \citet{mckenzie_when_2002} proposed replacing the reference point $R$ with a Minimum Acceptable Strength (MAS) threshold $(m \mid e)$, that depends on the evidence previously presented. We can therefore rewrite~Eq.~\ref{eq:aa-singleobs} as
\begin{equation}
    C_k = C_{k-1} + w_k \cdot (s(e_k) - (m_k \mid e_1, ..., e_{k-1})).
\end{equation}

To fit this class of models to our data, we follow \citet{trueblood2011quantum}, assuming a mapping between stick length and evidence strength given by a centered logistic function:
\begin{equation}
    \text{strength}(u) = \frac{1}{1 + \exp{(-B\cdot (u - 5))}} - 0.5,
\end{equation}
where the logistic growth rate $B$ is fit to the data (we used a uniform prior $B\sim \textrm{Unif}[0,10])$.
This function satisfies several desiderata: it is monotonically increasing in the size of the stick, it is bounded in the interval $[-1, 1]$, and it is centered in line with the prior over stick lengths, so that a stick of length 5 inches has a strength of 0.5.

For the anchor-and-adjust (AA) variant, we fix the reference point as~${R=0}$, and for the minimum acceptable strength (MAS) variant, we infer a reference point with prior $R\sim \textrm{Unif}[-1, 1]$.
We consider \emph{homogeneous} variants in which the entire population is assumed to share the same model with the same parameters, as well as a \emph{heterogeneous} model, in which we assume \emph{a priori} that participants are a convex combination of the two models. 
As in the RSA models, we infer the mixture weight $p_z$ that best explains the population-level mixture (marginalizing over latent variable assignments $z$).

\subsection{Higher levels of reasoning and the strong evidence effect}

While our cover story explicitly provided participants with the motivations of speakers, in terms of their financial incentives, these motivations are less obvious in most real-world scenarios.
They must be \emph{inferred} from what the speaker is saying.
This is straightforwardly derived in our framework by allowing the listener to jointly infer the true state of the world $w$ \emph{and} the speaker's bias $\beta$:
\begin{equation}
    P_{L_1} (w, \beta \mid u) \propto P_{S_1} (u \mid w, \beta) \cdot P(w)
\end{equation}

Our formulation raises a natural question about how speakers would behave if they were \emph{aware} judges were making such inferences.
This emerges at the next level of recursive reasoning:
\begin{equation}
    P_{S_2} (u \mid w, \beta) \propto \exp{\biggl(\lvert\beta\rvert \ln\bigl( P_{L_1} (w^* \mid u) - w_c \cdot C(u) \bigr)\biggr)}.
\end{equation}
where $C(u)$ represents some cost associated with being perceived as biased by the judge:
\begin{equation}
    C(u)=\mathbb{E}_{\beta\sim P_{L_1}(\cdot \mid u)} \bigl[\lvert \beta \rvert\bigr],
\end{equation}
and~${w_c\ge 0}$ is a parameter specifying the degree of the cost. 
We included a $J_2$ model who reasons about this listener in our model comparison (i.e. allowing participants to be explained by a convex combination of all three levels) and found that this three-level speaker-dependent model leads to improved performance over the two-level speaker-dependent model (max likelihood $=16.2$, WAIC$= -18.3\pm 8.9$, PSIS-LOO$= -9.2\pm 8.9$.)
We conjecture that this formulation is required to account for the \emph{strong evidence effect} \citep{perfors_stronger_2018}, in which the desire to appear unbiased leads a speaker to choose weaker evidence in spite of the presence of stronger alternatives, but leave further investigation for future work. 

\section{Appendix F: Transcript of the Experiment}
The written instructions for our experiment are reproduced below. Note that the task can be seen exactly as participants experienced it (e.g. with images) using the code released in our repository: \texttt{\href{https://github.com/s-a-barnett/bayesian-persuasion}{https://github.com/s-a-barnett/bayesian-persuasion}}.
\begin{quotation}
In this task, you will serve as the judge for a heated game between these two contestants. The two contestants in this game have been given a set of sticks ranging in length from very long ones to very short ones.  Sticks ranging in length from 1 to 9 inches are equally likely to appear in the set. One contestant (shown in pink) will be rewarded handsomely if they can convince you that the average length of these sticks is shorter than 5in (see dotted line). The other (shown in blue) will get paid if if they can convince you that the average length of these sticks is longer than 5in (see dotted line). In this case, the average length is 6in, so the position that this person was arguing for was true. As the judge, however, you will not be able to see the full set of sticks: you will only see what the contestants choose to show you. They will each get to show exactly one of the five sticks to convince you. After you see each stick, you will use this slider to report how strongly you are leaning in your decision. If you think the stick average is more likely to be shorter than 5in, click further to the left. If you think it is more likely to be longer than 5in, click further to the right.
\end{quotation}

\begin{figure}[t]
    \captionsetup{font={small,stretch=.75}, strut=on}

    \centering
    \includegraphics[width=0.8\linewidth]{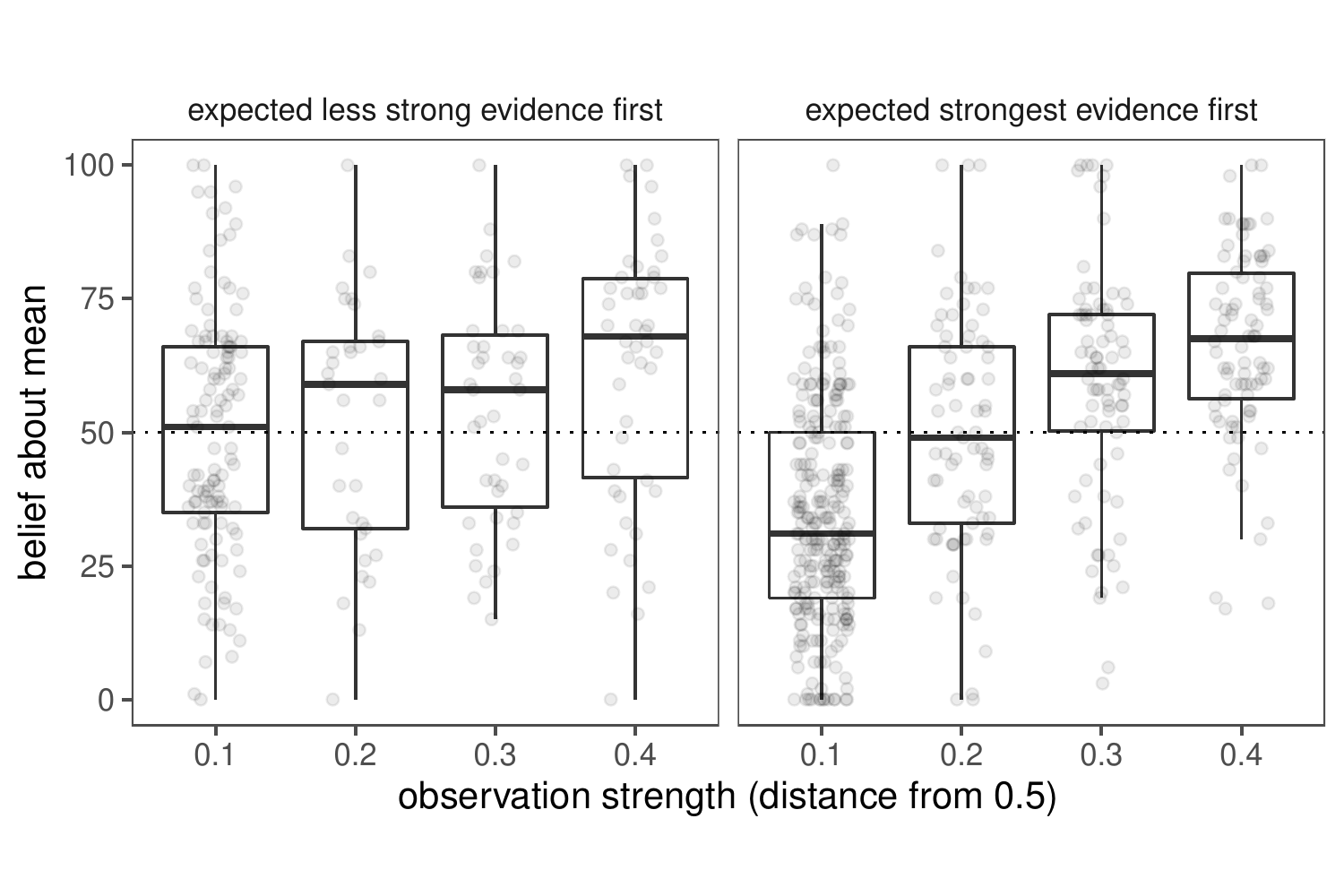}
    \caption{The raw data distribution of responses for the listener phase, where each individual (jittered) point is a different participant and the boxplot represents the median (dark line) and first and third quartiles (top and bottom of box) of the response distribution.}
    \label{fig:fig2_raw}
\end{figure}

\begin{figure}[th]
    \captionsetup{font={small,stretch=.75}, strut=on}

    \centering
    \includegraphics[width=0.75\linewidth]{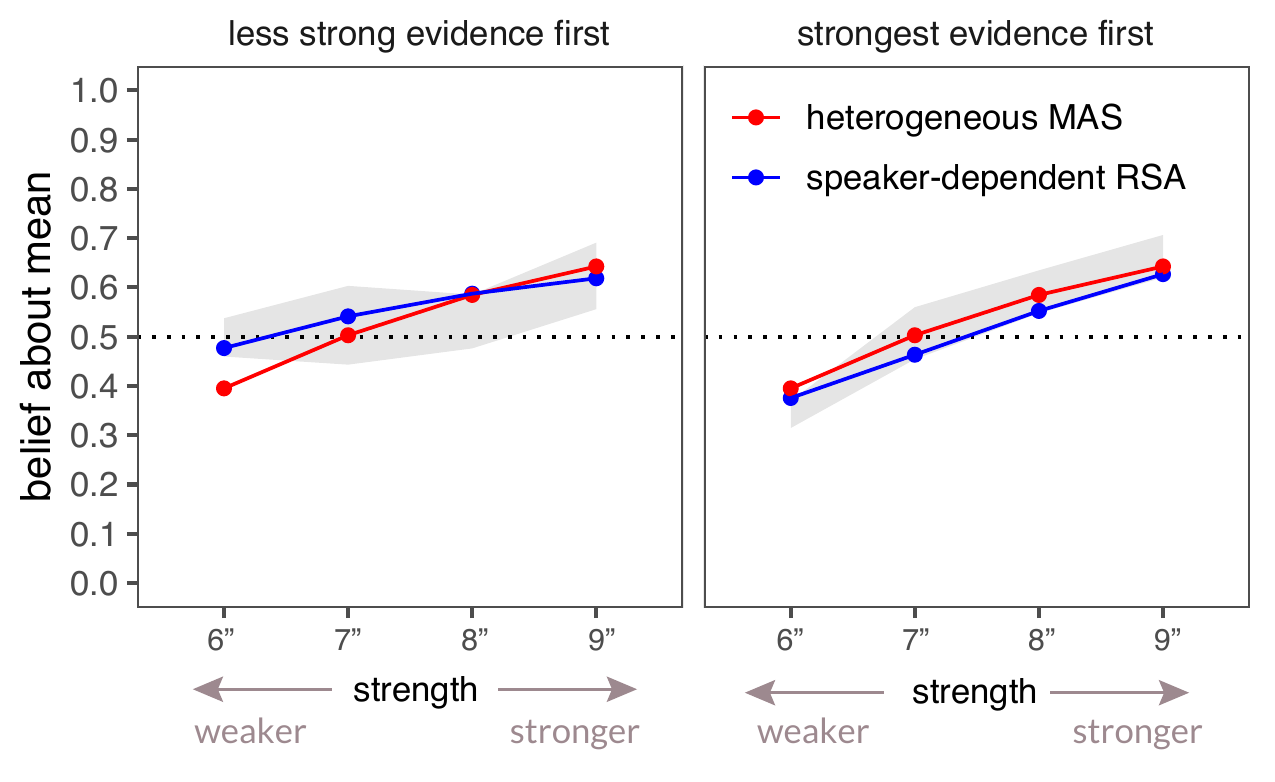}
    \caption{We visualized the posterior predictives for the speaker-dependent RSA model (blue) and heterogeneous MAS model (red). The facets represent which stick was expected to be chosen first in the speaker phase, and the grey region represents the 95\% confidence interval of the empirical data.
    }
    \label{fig:ppcheck}
\end{figure}

\begin{figure}
    \centering
        \captionsetup{font={small,stretch=.75}, strut=on}

        \centering
        \includegraphics[width=0.9\linewidth]{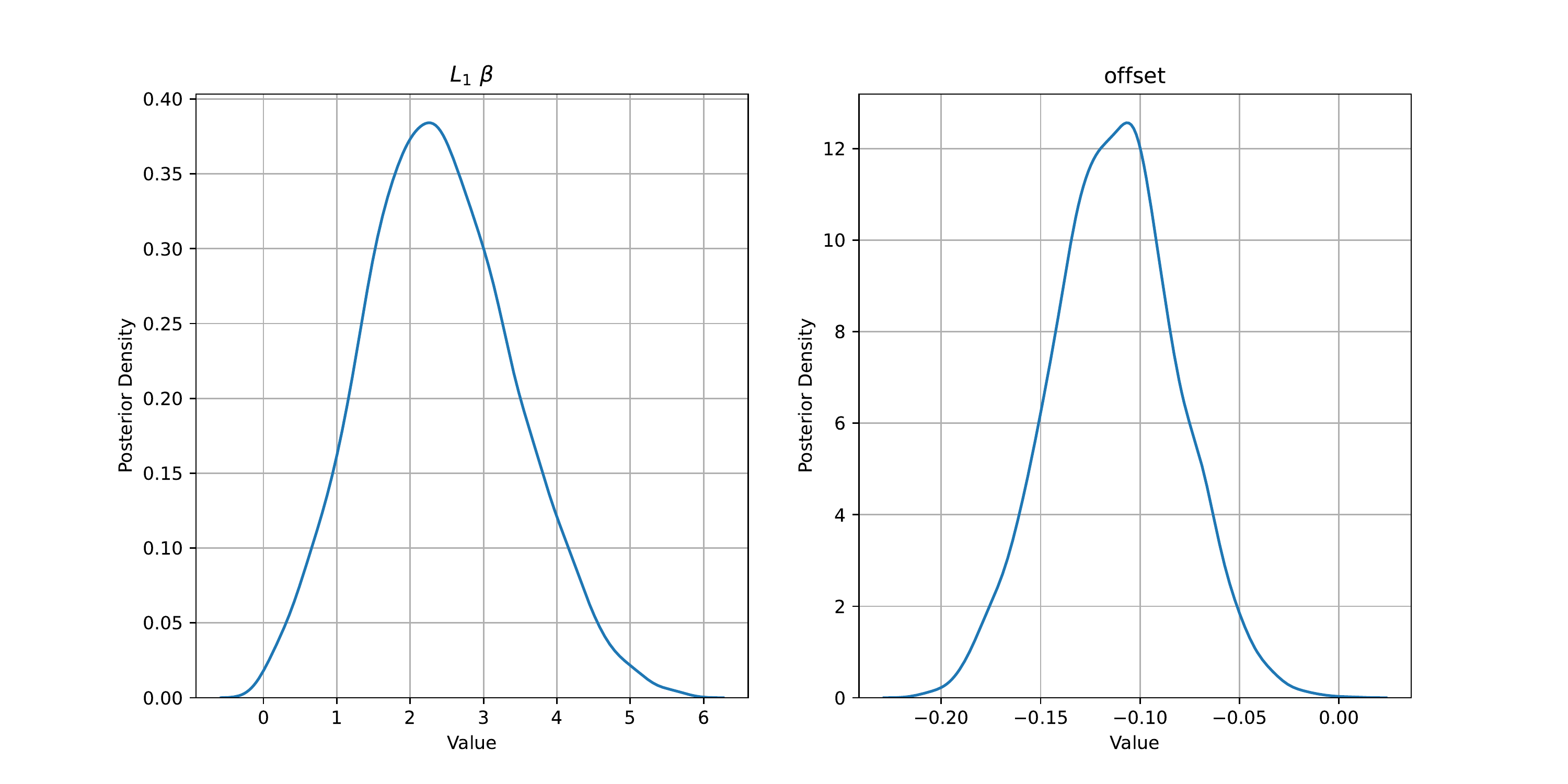}
        \includegraphics[width=0.8\linewidth]{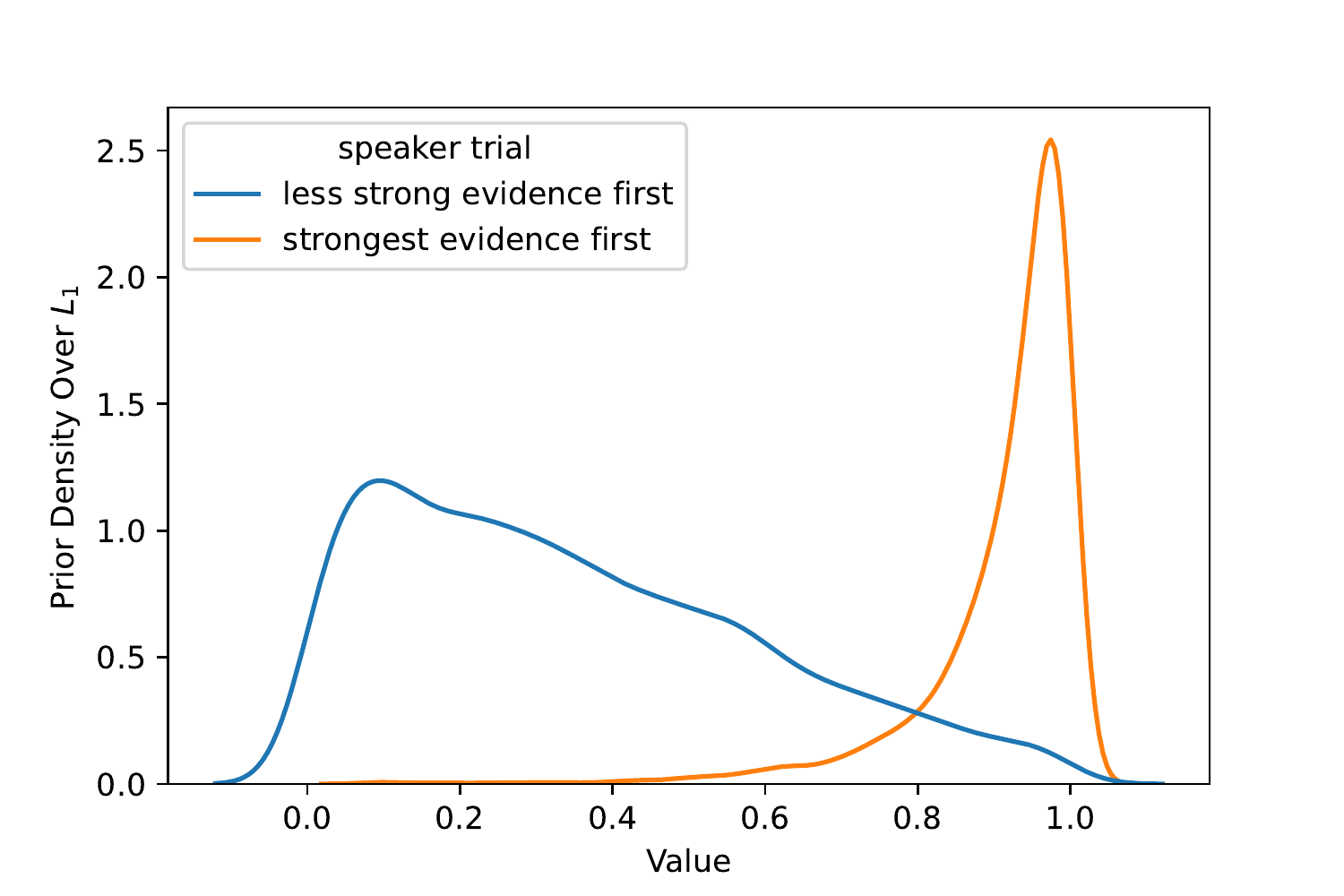}
        \caption{Full Bayesian posteriors for the parameters of the speaker-dependent RSA model. In the top panel, the MAP parameter values are found to be~${(\beta^*, o^*) = (2.26, -0.11)}$. The bottom panel shows the posteriors over mixture weights $p_z$ for the different speaker groups. The MAP parameter values for are $p_z=0.10$ for the less strong evidence group and $p_z=0.97$ for the strongest evidence group.}
        \label{fig:bayesian_posteriors}
\end{figure}

\begin{table}[b]
\begin{center}
    \captionsetup{font={small,stretch=.75}, strut=on}
    \begin{tabular}{p{0.1\linewidth}|p{0.9\linewidth}|}
    \hline
     \textbf{group} &  \textbf{What was your strategy for selecting sticks as a speaker?}  \\
         \hline\hline 
     \multirow{12}{0.1pt}{strongest evidence} & \texttt{If I need them to believe more than 5 inch i'd choose the biggest and opposite for below 5 inch} --- \emph{either choose the longest if I am blue, or the shortest if I am red} --- \texttt{I picked the longest or shortest stick based on what I wanted the judge to believe} --- \emph{Trying to show the extremes for each argument so the judge thinks the average is more likely to be closer to those} --- \texttt{I think it's best to show the longest/shortest stick you own - to make it appear that they're all very long/short} --- \emph{i guess it was to show extremes of the sizes of sticks i had, show the smallest on or the tallest one} --- \texttt{my strategy was to create the illusion  that the average lenght is bigger in the case I am the blue contestant by showing the longest sticks only, and the same with the red one only showing the shortest.} --- \emph{Pick the shortest or longest one to bump up or reduce the avearge}\\
         \hline
\multirow{6}{0.1pt}{weaker evidence} & \texttt{Selected slightly towards where the first stick suggested} --- \emph{I actually want to avoid the highest or lowest if I can at first to give the impression that you yourself have picked a more "average" stick.} ---  \texttt{Not going too far either way, but just enough to seem less obvious.} --- \emph{To show a slightly longer or shorter length than the average to try persuade the judge otherwise.} --- \texttt{show some variation to gain trust} --- \emph{try to keep them guessing} \\
    \end{tabular}
        \caption{Participants were presented with a free-response text field to explain their reasoning at the end of both phases.  Here we provide sample responses from the end of the \emph{speaker} phase, from both participants who expected the \emph{strongest} evidence and those who expected less strong evidence.}
        \label{tab:speaker_strategy}
\end{center}
\end{table}

\begin{table}
\begin{center}
    \captionsetup{font={small,stretch=.75}, strut=on}

    \begin{tabular}{p{0.1\linewidth}|p{0.9\linewidth}|}
    \hline
    group  &   \textbf{How did you reach your decision as a judge?}  \\
     \hline\hline 
   \multirow{12}{0.1pt}{strongest evidence} &    \texttt{6 is not very much over the average that their trying to prove - which makes me think that al the other sticks are even shorter than that."} 
    --- \emph{I was thinking that the pink player would choose the shortest stick, whilst the blue would choose the longest} ---
         \texttt{if 4cm was the shortest stick available then the maximum number of sticks above 5cm would be 4} ---
         \emph{blue showed me a very long stick meaning there would have to be an opposite short stick to average it out. pink however did not show a very short stick suggesting there aren't any.} --- \texttt{the blue would have shown a longer one if it was there} --- \emph{I assume that blue would be likely to pick the longest possible stick as they have an incentive to make me think the average is above 5in; if they only present a 6in stick, it is likely that the average is under 5in.} \\
         \hline
         \multirow{5}{0.1pt}{weaker evidence} &  \texttt{tried to do a average} --- \emph{Felt the pink player was bluffing} --- \texttt{thge average of the 2 sticks was shorter than 5} --- \emph{Looking at the average of the values I'd been given so far} --- \texttt{seemed similar to how I played it so assumed there were more long ones to come like in my strategy} --- \emph{the contestant is likely to trick you}
    \end{tabular}
        \caption{Sample responses from the end of the \emph{judge} phase, from both participants who expected the \emph{strongest} evidence and those who expected less strong evidence.}
        \label{tab:judge_strategy}
    \end{center}
\end{table}


\end{document}